\documentclass[11pt, english, submission, copyright, creativecommons]{eptcs}

\input{header}

\title{Functorial Question Answering}
\author{
Giovanni de Felice$^\dagger$, Konstantinos Meichanetzidis$^{\dagger\star}$, Alexis Toumi$^\dagger$
\institute{$\dagger$ Department of Computer Science, University of Oxford.\\ $\star$ Cambridge Quantum Computing Ltd.}}

\begin{document}
\maketitle

\renewcommand{\tt}[1]{\mathtt{#1}}
\renewcommand{\bf}[1]{\mathbf{#1}}
\renewcommand{\sc}[1]{\textsc{#1}}

\begin{abstract}
  Distributional compositional (DisCo) models are functors that
  compute the meaning of a sentence from the meaning of its words.
  We show that DisCo models in the category of sets and relations
  correspond precisely to relational databases. As a consequence, we get
  complexity-theoretic reductions from semantics and entailment of a
  fragment of natural language to evaluation and containment of conjunctive
  queries, respectively. Finally, we define question answering as an
  NP-complete problem.
\end{abstract}

\section*{Introduction}
Pregroup grammar is a model of natural language syntax first introduced by
Lambek \cite{Lambek99, Lambek08}, which admits a simple formulation in category
theory. The distributional compositional (DisCo) models of Coecke
et al. \cite{ClarkEtAl08, ClarkEtAl10} give a semantics to pregroup grammars
as functors into the category of vector spaces and linear maps.
They compute the meaning of a sentence as the composition of the vectors
encoding the distributional semantics of its constituent words.

In this paper, we consider DisCo models in the category of finite sets and
relations. We show they correspond to models of pregroup grammars in
\emph{regular logic}, the fragment of first-order logic generated by relational
symbols, equality $(=)$, truth $(\top)$, conjunction $(\land)$ and existential
quantification $(\exists)$. Regular logic formulae play a foundational role in
the theory of relational databases, where they corresponds to
\emph{conjunctive queries}. Chandra and Merlin \cite{ChandraMerlin77}
showed that conjunctive query evaluation and containment
are logspace equivalent to graph homomorphism: they are $\tt{NP-complete}$.
Bonchi et al. \cite{BonchiEtAl18}, reformulated this in terms of the free
Cartesian bicategory $\bf{CB}(\Sigma)$ generated by a relational signature
$\Sigma$: arrows are conjunctive queries, structure-preserving functors
$K: \bf{CB}(\Sigma) \rightarrow \bf{Rel}$ are precisely relational databases.

Relational DisCo models are functors $F : \bf{G} \to \bf{Rel}$ from the
rigid monoidal category $\bf{G}$ generated by a pregroup grammar.
We show that they factorise as $F = K \circ L$ for
$K : \bf{CB}(\Sigma) \rightarrow \bf{Rel}$ a relational database and
a free model $L : \bf{G} \to \bf{CB}(\Sigma)$,
turning sentences into regular logic formulae.
As a corollary, the computational problems for natural language $\tt{Semantics}$
and $\tt{Entailment}$ reduce to conjunctive query evaluation and containment, respectively.  Building on previous work \cite{CoeckeEtAl18a}, we show how to
translate a natural language corpus into a relational database.
The resulting $\tt{QuestionAnswering}$ problem is $\tt{NP-complete}$.

\subsection*{Related works}
Logical pregroup semantics has been developed in a line of work by Preller
\cite{PrellerSadrzadeh11, Preller14, Preller14a}, however the corresponding
reasoning problems were undecidable. Entailment in distributional models was considered in \cite{SadrzadehEtAl18}, but sentences could only be compared
if they have the same grammatical structure.

\section{Lambek Pregroups and Free Rigid Categories}
Pregroup grammar is an algebraic model of natural language grammar introduced by Lambek \cite{Lambek99}, it is
weakly equivalent to context-free grammars \cite{BuszkowskiMoroz08}.
In this section, we give a definition of the $\tt{Parsing}$ problem in terms of the homsets of the free \emph{rigid monoidal category} generated by a pregroup dictionary.

Given a natural number $n \in \N$, we abuse notation and let $n = \set{ i \in \N \ \vert \ i < n }$.
Given sets $X$ and $Y$, $X + Y$ and $X \times Y$ denote the disjoint sum and the Cartesian product respectively.
Let $List(X) = \bigcup_{n \in \N} X^n$ be the free monoid with unit $\epsilon \in X^0$ the empty list and product denoted by concatenation.

\begin{definition}
  A pregroup is a strict monoidal category with unit $\epsilon$ and product denoted by concatenation, which is thin --- i.e. with at most one arrow (denoted $\leq$) between any two objects --- and where each object $t$ has left and right adjoints denoted $^\star t$ and $t^\star$, i.e.
  \begin{itemize}
    \item \makebox[10cm][l]{$t ({}^{\star}t) \s \leq \s \epsilon \s \leq \s ({}^{\star}t) t$} (left adjunction)
    \item \makebox[10cm][l]{$(t^\star) t \s \leq \s \epsilon \s \leq \s t (t^\star)$} (right adjunction)
  \end{itemize}
\end{definition}

A pregroup grammar is a tuple $G = (V, B, \Delta)$ where $V$ is a finite set called the \emph{vocabulary}, $B$ is a poset of \emph{basic types} and $\rel{\Delta}{V}{P(B)}$ is a finite set of pairs called the \emph{dictionary}, where $P(B)$ is the free pregroup generated by $B$ as defined in \cite{Lambek99}.
We write $\Delta(u) = \set{t(0) \dots t(n - 1) \ \vert \ t \in \prod_{i < n} \Delta(u(i))}$ for $u \in V^n$.

\begin{definition}\begin{problem}
  \problemtitle{$\tt{Grammaticality}$}
  \probleminput{$G = (V, B, \Delta), \quad u \in List(V), \quad s \in P(B)$}
  \problemoutput{$\exists \ t \in \Delta(u) \s \cdot \s t \leq s$}
\end{problem}\end{definition}

\begin{example}\label{example0}
    Take the basic types $B = \set{s, q, d, n, i, o}$ for sentence, question, determinant, noun, subject and object respectively, with $n \leq i$ and $n \leq o$.
    The dictionary $\Delta$ assigns $({}^\star i) s (o^\star)$ to transitive verbs and $({}^\star d) n$ to common nouns.
    The word ``who'' is assigned both $({}^\star n) n (s^\star) i$ and $q (s^\star) i$.
    From the pregroup axioms it follows that:
    $$
    q (s^\star) i \s ({}^\star i) s (o^\star) \s d \s ({}^\star d) n \s ({}^\star n) n (s^\star) i \s ({}^\star i) s (o^\star) \s n \s \leq \s q
    $$
    i.e. $u = \text{``Who influenced the philosopher who discovered calculus?''}$ is grammatical.
\end{example}

\begin{lemma}[Switching lemma \cite{Lambek99}]
  For all $t \leq s \in P(B)$ there is some $t' \in P(B)$ and a pair of reductions $t \leq t'$ and $t' \leq s$ with no expansions and no contractions respectively.
\end{lemma}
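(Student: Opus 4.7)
My plan is to prove the switching lemma by a rewriting / normal-form argument on reduction sequences. Given any reduction $t = t_0 \leq t_1 \leq \cdots \leq t_n = s$ presented as a sequence of elementary steps, each being a contraction $t \, ({}^\star t) \leq \epsilon$ or $(t^\star) t \leq \epsilon$, an expansion $\epsilon \leq ({}^\star t) t$ or $\epsilon \leq t \, (t^\star)$, or an inequality induced by the basic-type poset, I would iteratively permute expansions to appear after every contraction, producing a reduction in which no expansion occurs before any contraction. The word sitting between the last contraction and the first expansion is then the desired $t'$.

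The key technical step is a local move: whenever a single expansion is immediately followed by a single contraction, I would show that the pair can either be commuted or cancelled entirely. An expansion inserts a fresh pair $({}^\star t) t$ or $t \, (t^\star)$ between two adjacent letters of the current word $u v$, yielding e.g.\ $u \, ({}^\star t) \, t \, v$. Case analysis on where the subsequent contraction acts gives three possibilities: (i) it acts on a substring disjoint from the inserted pair, so the two steps commute trivially; (ii) it pairs the inserted ${}^\star t$ with the last letter of $u$, which forces that letter to be $t$ and, after cancellation, returns the word to $u v$; (iii) it pairs the inserted $t$ with the first letter of $v$, which forces that letter to be ${}^\star t$ and again returns the word to $u v$. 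The inserted pair itself cannot be contracted, since $({}^\star t) t$ and $t \, (t^\star)$ have the wrong polarity. The symmetric situation for expansions $\epsilon \leq t \, (t^\star)$ is handled identically, and induced poset steps slide freely past contractions and expansions at any other position.

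To conclude, I would attach to each reduction sequence the lexicographic measure (total length of the sequence, number of expansion-before-contraction inversions) and observe that the local move strictly decreases this measure: case (i) drops the inversion count while preserving length, and cases (ii) and (iii) shorten the sequence by two. Well-foundedness of $\N \times \N$ under the lexicographic order then forces termination in a sequence where every contraction precedes every expansion, at which point reading off the intermediate word yields $t'$.

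The main obstacle I anticipate is the bookkeeping in the local move: verifying that in cases (ii) and (iii) the adjacent original letter must indeed have the unique type required by the contraction, and that the word after both steps is exactly $u v$ again. This requires carefully enumerating the four shapes of interaction between the two flavours of expansion and the two flavours of contraction. Once this rigid polarity check is pinned down, the confluence-style induction on the measure is essentially automatic.
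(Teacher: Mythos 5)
The paper offers no proof of this lemma --- it is imported verbatim from Lambek \cite{Lambek99} --- so there is nothing in the text to compare against except the citation. Your rewriting argument is, in substance, Lambek's original proof: postpone expansions until after all contractions by a local commutation/cancellation move, and terminate via a well-founded measure. Your polarity check is correct (an inserted pair $({}^\star t)\,t$ or $t\,(t^\star)$ has its letters in the wrong order to match either contraction pattern $a\,({}^\star a)$ or $(a^\star)\,a$, since that would force $t^\star = {}^\star t$), your cases (ii) and (iii) do return the word to $u\,v$ and shorten the sequence by two, and the lexicographic measure on (length, inversion count) does decrease under every move. The one point where the sketch underestimates the work is the induced steps coming from the poset structure on $B$: they do \emph{not} simply ``slide freely past'' the other moves, because an induced step may act on the very letter an expansion has just created or a contraction is about to consume, and then no disjoint commutation is available. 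This is precisely why Lambek's proof works with \emph{generalized} contractions and expansions, which contract or expand a pair of adjacent iterated adjoints whose underlying basic types are merely comparable in $B$ (with the direction of comparison depending on the parity of the adjoint); once the stray induced steps are absorbed into the neighbouring contraction or expansion, your case analysis goes through, except that in cases (ii) and (iii) the expansion--contraction pair may collapse to a single induced step rather than cancelling outright --- which still strictly shortens the sequence, so the termination argument survives. With that amendment your proposal is a faithful reconstruction of the standard proof; note also that Lemma~\ref{get-rid-of-induced-steps} in the paper is the authors' own way of sidestepping exactly this complication, by reducing to a discrete poset of basic types where your simpler case analysis is already exhaustive.
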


\begin{corollary}[\cite{BuszkowskiMoroz08}]
$\tt{Grammaticality} \in \tt{P}$
\end{corollary}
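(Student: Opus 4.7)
The plan is to use the Switching Lemma to decouple contraction and expansion reductions, and then to solve each subproblem by a CYK-style dynamic program.

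First, I would observe that each candidate $t \in \Delta(u)$ has length $O(|u|)$: since $\Delta$ is finite, each $\Delta(u(i))$ is a finite set of bounded-length types, and $t$ is a concatenation of one such type per word. The full set $\Delta(u)$ can have exponentially many elements in $|u|$, so the algorithm must avoid enumerating it explicitly.

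Second, by the Switching Lemma, the predicate ``$\exists\, t \in \Delta(u) \cdot t \leq s$'' holds iff there exist $t \in \Delta(u)$ and an intermediate $t' \in P(B)$ such that $t \leq t'$ uses only contractions (and basic-type inequalities) and $t' \leq s$ uses only expansions (and basic-type inequalities). Contractions do not increase length and expansions do not decrease it, so $|t'| \leq \min(|t|, |s|) = O(|u| + |s|)$. Moreover, the simple types appearing in $t'$ should be drawable from a polynomial-size alphabet determined by $s$ and by the dictionary entries of the words of $u$, since contractions cannot introduce new simple types and the relation $t' \leq s$ via expansions means $s$ is obtained from $t'$ by inserting matched adjoint pairs.

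Third, I would fill a CYK-style table $A[i, j, \tau]$ recording whether some $t \in \Delta(u[i{:}j])$ contracts to $\tau$, with $\tau$ ranging over the aforementioned polynomial family of candidate intermediates. The base case $j = i+1$ asks whether $\tau$ is a contraction of some element of $\Delta(u(i))$; the inductive case asks for a split point $k$ and types $\tau_1, \tau_2$ with $A[i,k,\tau_1]$, $A[k,j,\tau_2]$, and $\tau_1\tau_2$ contracting to $\tau$. In parallel I would precompute the set of $\tau$ satisfying $\tau \leq s$ by expansions alone, which is a local bracket-matching computation on $s$. The answer to $\tt{Grammaticality}$ is yes iff some $\tau$ satisfies both $A[0, |u|, \tau]$ and $\tau \leq s$.

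The main obstacle is proving that this polynomial family of intermediates $t'$ is in fact complete --- that no valid reduction is lost when we restrict to candidates of bounded length over the restricted alphabet of simple types and adjoint exponents. This amounts to a normal-form analysis of contraction reductions modulo the basic-type order on $B$, and is the technical heart of the Buszkowski--Moroz argument.
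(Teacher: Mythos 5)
Your overall route---using the switching lemma to separate contractions from expansions and then running a dynamic program over segments of $u$---is a legitimate alternative to the paper's proof, which simply invokes the Buszkowski--Moroz translation of pregroup grammars into context-free grammars and then relies on polynomial-time CFG recognition. A direct algorithm along your lines does exist (it is essentially the cubic parsing algorithm of Moroz cited later in the paper). But as written your argument has a genuine gap, and it is not the one you flag at the end.

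The problem is the state space of your table $A[i,j,\tau]$. You establish that every relevant intermediate $t'$ has length $O(\size{u}+\size{s})$ and is built from a polynomial-size alphabet of simple types. That does not yield a polynomial family of candidates: the number of strings of length $\ell$ over an alphabet of size $k$ is $k^{\ell}$, so your third index ranges over exponentially many values and the table cannot be enumerated, let alone filled, in polynomial time. The obstacle is therefore not, as you suggest, proving that a polynomial family of intermediates is \emph{complete}; it is that no polynomial family has been exhibited at all. To repair this you must change what the DP remembers. One standard fix is to keep only a Boolean table $R[i,j]$ recording whether the block of simple types in positions $i$ through $j$ of the concatenated lexical type contracts to $\epsilon$ (computable in cubic time by a nested-pair plus split-point recursion, using planarity of contractions), combined with a second polynomial-size table matching the surviving, uncancelled simple types against successive positions of $s$; then an intermediate $t'$ is only ever represented implicitly by position indices, never as a string. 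Alternatively, follow the paper and Buszkowski--Moroz: compile the pregroup grammar into a context-free grammar whose nonterminals are single simple types rather than whole types, after which standard CYK gives the polynomial bound and the exponential blow-up never arises.
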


\begin{proof}
  The proof goes by translating pregroup grammars to context-free grammars.
\end{proof}

As thin categories, pregroups cannot distinguish between distinct parsings of the same phrase, e.g. ``men and (women who read)'' and ``(men and women) who read''.
This motivated Preller, Lambek \cite{PrellerLambek07} to introduce free compact 2-categories, capturing proof-relevance in pregroup grammars.
We will use compact 2-categories with one 0-cell, first introduced in Joyal, Street \cite{JoyalStreet88} from which we use the \emph{planar string diagram} notation.
We refer the reader to Selinger's survey \cite{selinger2010a} where they are called \emph{rigid monoidal categories}.

\begin{definition}\label{def-rigid}
  A (strict) monoidal category $(\bf{C}, \otimes, \epsilon)$ is rigid when each object $t \in Ob(\bf{C})$ has left and right adjoints ${}^\star t$ and $t^\star$ and two pairs of arrows $t \otimes {}^\star t \to \epsilon \to {}^\star t \otimes t$ and $t {}^\star \otimes t \to \epsilon \to t \otimes t^\star$
  depicted by cups and caps, subject to the following snake equations:
\begin{equation*}
  \begin{gathered}
  \begin{tikzpicture} [scale=0.8, decoration={markings, mark=at position 0.5 with {\arrow{<}}}]
    \draw (-1, 1)--(-1, 0);

    \draw  (-1, 0) to [out=-90, in=180] (0, -1);
    \node[scale=1] at (-0.5, 0) {$t$};
    \draw  (0, -1) to [out=0, in=-90] (1, 0);

    \node[scale=1] at (1.5, 0) {${}^\star t$};

    \draw  (1, 0) to [out=90, in=180] (2, 1);
    \node[scale=1] at (3.5, 0) {$t$};
    \draw  (2, 1) to [out=0, in=90] (3, 0);

    \draw  (3, 0)--(3, -1);
  \end{tikzpicture}
  \end{gathered}
  \quad
  =
  \quad
  \begin{gathered}
  \begin{tikzpicture} [scale=0.8, decoration={markings, mark=at position 0.5 with {\arrow{>}}}]
    \draw (0,-1) to (0,1);
    \node[scale=1] at (0.5, 0) {$t$};
  \end{tikzpicture}
  \end{gathered}
  \quad
  =
  \quad
  \begin{gathered}
  \begin{tikzpicture} [scale=0.8, decoration={markings, mark=at position 0.5 with {\arrow{<}}}]
    \draw (1, 1)--(1, 0);

    \draw  (1, 0) to [out=-90, in=0] (0, -1);
    \node[scale=1] at (-2.5, 0) {$t$};
    \draw  (0, -1) to [out=180, in=-90] (-1, 0);

    \node[scale=1] at (-0.5, 0) {$t^\star$};

    \draw  (-1, 0) to [out=90, in=0] (-2, 1);
    \node[scale=1] at (1.5, 0) {$t$};
    \draw  (-2, 1) to [out=180, in=90] (-3, 0);

    \draw  (-3, 0)--(-3, -1);
  \end{tikzpicture}
  \end{gathered}
\end{equation*}
  We say a strong monoidal functor is rigid when it sends cups to cups and caps to caps.
\end{definition}

Given a pregroup grammar $G = (V, B, \Delta)$, the dictionary $\Delta \sub V \times P(B)$ defines an \emph{autonomous signature} with generating objects $V + B$ and arrows $\set{w \to t}_{(w, t) \in \Delta}$.
We write $\bf{G}$ for the free rigid category that it generates, also called the \emph{lexical category} in \cite{Preller14}.
An arrow $r : u \to s$ for an utterance $u \in List(V)$ is a proof that $u$ is a grammatical sentence, i.e. a dictionnary entry for each word followed by a diagram which encodes the reduction.
Hence, the free rigid category $\bf{G}$ allows us to encode parsing as a function problem.

\begin{lemma}
  Given a pregroup grammar $G = (V, B, \Delta)$, an utterance $u \in List(V)$ and a type $s \in P(B)$, we have $(G, u, s) \in \tt{Grammaticality} \iff \exists \ r \in \bf{G}(u, s)$.
\end{lemma}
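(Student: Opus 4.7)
The plan is to prove both directions by unpacking the universal property of $\bf{G}$ as the free rigid monoidal category generated by the autonomous signature of $\Delta$, together with the basic inequalities of $B$ as additional generating arrows.

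For the forward direction, suppose $(G, u, s) \in \tt{Grammaticality}$ is witnessed by some $t \in \Delta(u)$ with $t \leq s$ in $P(B)$. Write $u = w_0 \dots w_{n-1}$ and $t = t(0) \dots t(n-1)$ with each $(w_i, t(i)) \in \Delta$; the tensor $\bigotimes_{i < n} (w_i \to t(i))$ of the corresponding generating arrows yields an arrow $u \to t$ in $\bf{G}$. The inequality $t \leq s$ in $P(B)$ is witnessed by a finite sequence of adjunction contractions, adjunction expansions, and applications of basic type inequalities, each of which lifts in $\bf{G}$ to a cup, a cap, or the corresponding generating arrow respectively. Composing these with the dictionary phase produces the desired $r \in \bf{G}(u, s)$.

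For the backward direction, suppose $r \in \bf{G}(u, s)$. Since $\bf{G}$ is free, $r$ is presented by a planar string diagram whose boxes are generating arrows and whose wires are labelled by elements of $V + B$ together with their formal adjoints. The key observation is that no cup, cap, or basic inequality can touch a $V$-typed wire, and that no such wire appears in $s$; hence each $w_i$ on the top boundary must be the input of exactly one dictionary-entry box with output $t(i) \in P(B)$. Collecting these outputs defines $t = t(0) \dots t(n-1) \in \Delta(u)$, and the subdiagram obtained by cutting immediately below these boxes is a morphism $t \to s$ in the free rigid category on the poset $B$. By the Preller--Lambek correspondence \cite{PrellerLambek07}, this category is the proof-relevant refinement of $P(B)$, so the existence of the subdiagram implies $t \leq s$, whence $(G, u, s) \in \tt{Grammaticality}$.

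The main obstacle is justifying the cutting step: one has to argue that any diagram can be rewritten so that all dictionary-entry boxes appear immediately below the top boundary, with no spurious $V$-typed cup-cap pairs. This follows from the coherence theorem for rigid categories combined with the syntactic remark that $V$-types occur only as the domain of dictionary generators, but a fully rigorous combinatorial analysis of planar rigid string diagrams would best be relegated to an appendix; the remainder is a mechanical translation between the adjunction axioms of pregroups and the snake equations.
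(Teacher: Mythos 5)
Your overall decomposition is the same as the paper's: an arrow $r \in \bf{G}(u,s)$ should factor as $u \to t \to s$ with $t \in \Delta(u)$ and $t \leq s$, the forward direction being a routine lifting of a pregroup reduction to cups and caps. The forward direction is fine (modulo the small mismatch that the paper's autonomous signature has only the dictionary entries as generating arrows, the basic inequalities being handled separately via Lemma~\ref{get-rid-of-induced-steps}). The problem is the backward direction, where you have correctly located the difficulty but not resolved it.

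Concretely, the claim that ``no cup, cap, or basic inequality can touch a $V$-typed wire'' is false in $\bf{G}$: by Definition~\ref{def-rigid} \emph{every} object of a rigid category has left and right adjoints and the associated cups and caps, including the word objects $w \in V$ and their iterated adjoints. So a priori a diagram in $\bf{G}(u,s)$ may create a $w$-wire with a cap in the middle of the diagram, feed it into a dictionary box, and annihilate the boundary occurrence of $w$ with a cup; your ``each $w_i$ on the top boundary must be the input of exactly one dictionary-entry box'' only holds \emph{up to the snake equations}, i.e.\ after normalisation. That normalisation --- every arrow of the free compact 2-category factors through a ``lexical phase'' followed by a reduction, with expansions postponable past contractions --- is precisely the switching lemma for compact 2-categories proved by Preller and Lambek \cite{PrellerLambek07}, which is what the paper cites; it is not a consequence of a generic coherence theorem for rigid categories (coherence governs the structural isomorphisms, not the interaction of arbitrary generating boxes with cups and caps). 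Since this normalisation is the entire content of the hard direction, deferring its ``fully rigorous combinatorial analysis'' to an appendix leaves the proof with a genuine gap; you should either invoke the Preller--Lambek switching lemma directly or actually carry out the planar-diagram rewriting argument.
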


\begin{proof}
  This follows from the switching lemma for compact 2-categories as proved in \cite{PrellerLambek07}.
  Arrows $r \in \bf{G}(u, s)$ are of the form $r : u \to t \to s$ for some type $t \in \Delta(u)$ with $t \leq s$.
\end{proof}

\begin{definition}\begin{problem}
  \problemtitle{$\tt{Parsing}$}
  \probleminput{$G = (V, B, \Delta), \quad u \in List(V), \quad s \in P(B)$}
  \problemoutput{$r \in \bf{G}(u, s)$}
\end{problem}\end{definition}

\begin{proposition}[\cite{moroz2011}]
  $\tt{Parsing}$ is poly-time computable in the size of the basic types $B$, the dictionary $\rel{\Delta}{V}{P(B)}$ and the length of the inputs $(u, s) \in List(V) \times P(B)$.
\end{proposition}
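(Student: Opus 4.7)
The plan is to turn the decision procedure behind the preceding corollary into a witness-producing algorithm and translate the witness into a string diagram of $\bf{G}$. By the lemma immediately preceding the definition of $\tt{Parsing}$, every arrow $r \in \bf{G}(u,s)$ factorises as $u \to t \to s$ with $t \in \Delta(u)$ and $t \leq s$ in $P(B)$. So the task splits into (i) choosing a dictionary entry for each word of $u$, which gives the leftmost factor $u \to t$, and (ii) producing a proof-relevant reduction $t \leq s$, which we read off as a composite of cups, caps, and identities in $\bf{G}$.

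First I would instrument the Buszkowski--Moroz translation to context-free grammars used in the preceding corollary: run the standard cubic CFG parser on the grammar associated to $(G, s)$, but also record, at each derivation step, whether it comes from a dictionary lookup or from a pregroup contraction, together with the basic type involved. This directly yields both a type $t \in \Delta(u)$ and a sequence of pregroup reductions witnessing $t \leq s$.

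Next I would apply the switching lemma to normalise the witness: factor $t \leq s$ as $t \leq t' \leq s$ with no expansions in the first part and no contractions in the second. Each contraction $x\, x^\star \leq \epsilon$ or ${}^\star x\, x \leq \epsilon$ is realised by a cup on the appropriate object of $\bf{G}$, and each expansion $\epsilon \leq x^\star\, x$ or $\epsilon \leq x\,{}^\star x$ by a cap. Whiskering and composing these cups and caps, and prepending the dictionary arrows $u(i) \to t(i)$, produces the desired $r : u \to s$.

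The main obstacle is ensuring polynomial bounds on the \emph{size} of the output diagram, not just on the decision time. The switching lemma bounds the length of the intermediate $t'$ by a polynomial in $|t| + |s|$, since contractions strictly shorten the type and expansions strictly lengthen it; Moroz's CFG translation already runs in time polynomial in $|B|$, $|\Delta|$, $|u|$, and $|s|$. Combining these two bounds shows that the string diagram has polynomial size and can be written out in polynomial time, which completes the proof.
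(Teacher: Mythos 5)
Your proposal is correct in substance, but it takes a different (and much more explicit) route than the paper, which offers no argument of its own: the paper's ``proof'' simply cites the known cubic-time parsing algorithm of Moroz (and Preller's linear-time algorithm for restricted fragments) and leaves the witness-production implicit. What you do instead is reconstruct why such an algorithm exists: you take the decision procedure behind the $\mathtt{Grammaticality} \in \mathtt{P}$ corollary, instrument the Buszkowski--Moroz CFG translation to extract a parse tree, use the factorisation $u \to t \to s$ from the lemma characterising $\mathbf{G}(u,s)$, normalise the reduction $t \leq s$ via the switching lemma, and then translate contractions and expansions into cups and caps. Crucially, you also address the point the paper glosses over entirely, namely that a function problem requires a polynomial bound on the \emph{size} of the output diagram and not merely on decision time; your observation that the switching lemma bounds the intermediate type $t'$ (indeed $\lvert t'\rvert \leq \min(\lvert t\rvert, \lvert s\rvert)$, since the first leg only contracts and the second only expands) is exactly the right way to close that gap. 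The one detail you omit is the treatment of \emph{induced steps} coming from the partial order on the basic types $B$ (e.g.\ $n \leq i$): these are neither cups nor caps and are not generating arrows of $\mathbf{G}$, so your dictionary-plus-cups-and-caps picture only goes through after first applying the paper's earlier lemma replacing $G$ by an equivalent grammar over a discrete poset of basic types (at polynomial cost in $\lvert B\rvert \cdot \lvert\Delta\rvert$). With that one-line addition your argument is a complete, self-contained proof, and is arguably more informative than the citation the paper gives.
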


\begin{proof}
  The parsing algorithm has time complexity $n^3$ in general, restricted cases of interest in linguistic applications may be parsed in linear time, see \cite{Preller07a}.
\end{proof}

Note that our definition of the lexical category $\bf{G}$ differs slightly from \cite{Preller14} in that we take not only basic types $b \in B$ but also words $w \in V$ as generating objects.
This allows us to capture both type assignment and reduction as a single arrow as well as to define the semantics of pregroup grammars as a functor, see definition~\ref{def-semantics} where we will also make use of the following lemma.

\begin{lemma}\label{get-rid-of-induced-steps}
  For any pregroup grammar $G = (V, B, \Delta)$ there is an equivalent grammar $G' = (V, B', \Delta')$ such that $\Delta \sub \Delta'$ and $B'$ is a discrete poset, i.e. $a \leq b \implies a = b$.
\end{lemma}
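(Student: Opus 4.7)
Take $B'$ to be the underlying set of $B$ endowed with the discrete partial order. Write $\leq_1$ for the order on $P(B)$ and $\leq_0$ for that on $P(B')$. The latter is generated by cups and caps alone, since the discrete structure on $B'$ trivialises induced steps; in particular $\leq_0 \s \sub \s \leq_1$. Define $\Delta'$ to consist of all pairs $(w, t'_1 \cdots t'_n)$ such that there exists $(w, t_1 \cdots t_n) \in \Delta$ with $t_i \leq_1 t'_i$ in $P(B)$ at each position $i$; concretely, $\Delta'$ extends $\Delta$ by independently replacing each basic type in a dictionary entry by a $\leq$-larger one at positions with an even number of adjoints and by a $\leq$-smaller one at positions with an odd number, reflecting the covariance and contravariance of the star operations. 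This yields a finite dictionary containing $\Delta$.

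The backward direction of equivalence is immediate: from a $G'$-witness $t' \in \Delta'(u)$ with $t' \leq_0 s$, unpacking the definition of $\Delta'$ produces $t \in \Delta(u)$ with $t \leq_1 t' \leq_0 s$, so $(u, s)$ is $G$-grammatical. For the forward direction, suppose $t \in \Delta(u)$ and $t \leq_1 s$. I apply the Switching Lemma to factor the reduction as $t \leq_1 t^* \leq_1 s$ with no expansions in the first leg and no contractions in the second, then push all induced steps to the very beginning of the combined reduction. The resulting normal form is $t \leq_1 t' \leq_0 s$ with the first leg using only induced steps; hence $t' \in \Delta'(u)$ by construction and $t' \leq_0 s$ as required.

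The main technical obstacle is the push step: commuting an induced step past a preceding cup or cap. Commutation is immediate when the induced step acts on a wire disjoint from the structural step. On a shared wire, a cup $b \cdot {}^\star b \to \epsilon$ can be pre-composed with an induced step at either leg using the dual pregroup identities ${}^\star b \leq {}^\star a$ and $b^\star \leq a^\star$ derived from $a \leq b$ in $B$. The subtle case of an induced step on a wire freshly introduced by a cap is resolved by re-choosing the cap's basic type and applying the dual induced step on the adjoint wire. Iterating these local rewrites to a global normal form is the main technical content of the proof, and the counting of inversions between induced and structural steps ensures termination.
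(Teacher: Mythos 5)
Your proof takes essentially the same route as the paper's: the paper's entire argument consists of the construction you give (discrete $B'$, closing $\Delta$ under positionwise induced steps), with equivalence merely asserted, so your switching-lemma normalization supplies what the paper omits, and your attention to contravariance at odd-adjoint positions is a detail the paper's one-line construction glosses over. The only fragile point is your commutation of induced steps past caps --- an induced step on a cap-wire whose sibling wire exits into the target type $s$ rather than into a cup or a dictionary entry cannot be absorbed anywhere --- but this case never arises when $s$ is a simple type, which by the switching lemma is the only situation in which the lemma is applied.
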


\begin{proof}
  For each $(w, \ t a t') \in \Delta$ with $a \leq b \in B$ we add $(w, \ t b t')$ as a dictionnary entry.
  This yields a dictionnary $\Delta'$ of size polynomial in $\size{B} \times \size{\Delta}$ and basic types $B'$ given by the underlying set of the poset $B$ such that $G$ and $G' = (V, B', \Delta')$ are equivalent.
\end{proof}

\begin{example}\label{example}
  The following planar diagram $r : u \to q$ corresponds to the parsing of example \ref{example0}, where we have we have omitted the types for readability.
  We keep our notation consistent with the literature by depicting dictionnary entries $w \to t$ as triangles labeled by $w \in V$ with output $t \in P(B)$.
  $$\begin{tikzpicture}[scale=0.666]

  \node (90) at (-13, 0) {};
  \node (91) at (-12, 1) {};
  \node (92) at (-11, 0) {};
  \node (93) at (-10, 0) {};
  \node (94) at (-9, 1) {};
  \node (95) at (-8, 0) {};
  \node (96) at (-9.5, 0) {};
  \node (97) at (-9, 0) {};
  \node (98) at (-8.5, 0) {};
  \node (99) at (-12.5, 0) {};
  \node (910) at (-12, 0) {};
  \node (911) at (-11.5, 0) {};
  \node (912) at (-12.5, -2.5) {};

		\draw (91.center) to (90.center);
		\draw (91.center) to (92.center);
		\draw (92.center) to (90.center);
		\draw (93.center) to (94.center);
		\draw (94.center) to (95.center);
		\draw (95.center) to (93.center);
		\draw [bend right=90, looseness=1.50] (911.center) to (96.center);
		\draw [bend right=90, looseness=1.50] (910.center) to (97.center);
		\draw (99.center) to (912.center);

		\node (6) at (-7, 0) {};
		\node (7) at (-6, 1) {};
		\node (8) at (-5, 0) {};
		\node (9) at (-4, 0) {};
		\node (10) at (-3, 1) {};
		\node (11) at (-2, 0) {};
		\node (12) at (-1, 0) {};
		\node (13) at (0, 1) {};
		\node (14) at (1, 0) {};
		\node (15) at (2, 0) {};
		\node (16) at (3, 1) {};
		\node (17) at (4, 0) {};
		\node (18) at (5, 0) {};
		\node (19) at (6, 1) {};
		\node (20) at (7, 0) {};
		\node (21) at (-12, 0) {};
		\node (22) at (-9.5, 0) {};
		\node (23) at (-9, 0) {};
		\node (24) at (-8.5, 0) {};
		\node (25) at (-6.5, 0) {};
		\node (26) at (-5.5, 0) {};
		\node (27) at (-3, 0) {};
		\node (28) at (-0.75, 0) {};
		\node (29) at (-0.25, 0) {};
		\node (30) at (0.25, 0) {};
		\node (31) at (0.75, 0) {};
		\node (32) at (2.5, 0) {};
		\node (33) at (3, 0) {};
		\node (34) at (3.5, 0) {};
		\node (35) at (6, 0) {};

		\node (37) at (-12, 0.3) {Who};
		\node (38) at (-9, 0.3) {infl};
		\node (39) at (-6, 0.3) {a};
		\node (40) at (-3, 0.3) {phil};
		\node (41) at (0, 0.3) {who};
		\node (42) at (3, 0.3) {disc};
		\node (43) at (6, 0.3) {calc};

		\node (49) at (-8.5, -1) {};
		\node (47) at (3.5, -1) {};

		\draw (6.center) to (7.center);
		\draw (7.center) to (8.center);
		\draw (8.center) to (6.center);
		\draw (9.center) to (10.center);
		\draw (10.center) to (11.center);
		\draw (11.center) to (9.center);
		\draw (12.center) to (13.center);
		\draw (13.center) to (14.center);
		\draw (14.center) to (12.center);
		\draw (15.center) to (16.center);
		\draw (16.center) to (17.center);
		\draw (17.center) to (15.center);
		\draw (18.center) to (19.center);
		\draw (19.center) to (20.center);
		\draw (20.center) to (18.center);

		\draw [bend right=90, looseness=1.75] (31.center) to (32.center);
		\draw [bend left=90, looseness=1.75] (33.center) to (30.center);

		\draw [bend right=90, looseness=1.75] (34.center) to (35.center);

		\draw [bend left=90, looseness=1] (29.center) to (98.center);

      \node (0) at (-6, 0) {};
      \node (1) at (-6, 0.25) {};
      \node (2) at (-6, 0.25) {};
      \node (3) at (-5, 0) {};
      \node (4) at (-7, 0) {};
      \node (5) at (-6, 1) {};
      \node (6) at (-4, 0) {};
      \node (7) at (-3, 1) {};
      \node (8) at (-2, 0) {};
      \node (9) at (-3.5, 0) {};
      \node (10) at (-2.5, 0) {};
      \node (11) at (-1, 0) {};
      \node (12) at (0, 1) {};
      \node (13) at (1, 0) {};
      \node (14) at (-0.75, 0) {};

    		\draw [bend right=90, looseness=1.50] (0.center) to (9.center);
    		\draw [bend right=90, looseness=1.75] (10.center) to (14.center);
\end{tikzpicture}
$$
\end{example}

\section{Conjunctive Queries and Free Cartesian Bicategories}
Relational structures are a convenient mathematical abstraction of relational databases.
In this section, we review relational structures, their
relationship to conjunctive queries and their formulation
in terms of cartesian bicategories.

A \emph{relational signature} is a set of symbols $\Sigma$ equipped with a function $\tt{ar} : \Sigma \to \N$.
Given a finite set $U$, the set of \emph{$\Sigma$-structures} is $\cal{M}_\Sigma(U) = \set{K \sub \coprod_{R \in \Sigma} U^{\tt{ar}(R)}}$, i.e.
a $\Sigma$-structure $K$ gives an interpretation $K(R) \sub U^{\tt{ar}(R)}$ for every symbol $R \in \Sigma$.
Let $\cal{M}_\Sigma$ be the set of all finite $\Sigma$-structures and $U(K)$ the underlying universe of $K \in \cal{M}_\Sigma$.
Given two $\Sigma$-structures $K, K'$, a homomorphism $f : K \to K'$ is a function $f : U(K) \to U(K')$ such that $\forall \ R \in \Sigma \s \forall \ \vec{x} \in U^{\tt{ar}(R)} \s \cdot \s \vec{x} \in K(R) \implies f(\vec{x}) \in K'(R)$.

\begin{definition}
    \begin{problem}
    \problemtitle{$\tt{Homomorphism}$}
    \probleminput{$K, K' \in \cal{M}_\Sigma$}
    \problemoutput{$f : K \to K'$}
  \end{problem}
\end{definition}

\begin{proposition}\cite{GareyJohnson90}
  $\tt{Homomorphism}$ is $\tt{NP-complete}$.
\end{proposition}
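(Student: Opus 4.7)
The plan is to verify the two standard conditions for $\tt{NP\text{-}complete}$ness: membership in $\tt{NP}$ and $\tt{NP}$-hardness.

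First, for membership in $\tt{NP}$, I would observe that a candidate homomorphism $f : U(K) \to U(K')$ is just a function between finite sets, so it admits a description of size $O(\size{U(K)} \cdot \log \size{U(K')})$, polynomial in the input. A polynomial-time verifier then iterates over every symbol $R \in \Sigma$ and every tuple $\vec{x} \in K(R)$, computes $f(\vec{x})$ componentwise, and checks membership in $K'(R)$. The total cost is polynomial in $\sum_{R \in \Sigma} \size{K(R)}$, so $\tt{Homomorphism} \in \tt{NP}$.

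Second, for $\tt{NP}$-hardness I would reduce from graph $3$-colourability, which is $\tt{NP}$-complete by \cite{GareyJohnson90}. Fix the signature $\Sigma = \set{E}$ with $\tt{ar}(E) = 2$; finite graphs are precisely the $\Sigma$-structures whose relation is symmetric and irreflexive. Given an instance $G$ of $3$-colouring, let $K$ be the $\Sigma$-structure whose universe is the vertex set of $G$ and whose interpretation $K(E)$ is the edge relation of $G$, and let $K'$ be the $\Sigma$-structure on $\set{0,1,2}$ with $K'(E) = \set{(i,j) \mid i \neq j}$, i.e. the complete graph $K_3$. A homomorphism $f : K \to K'$ is exactly an assignment of colours to vertices such that adjacent vertices receive distinct colours, so $G$ is $3$-colourable iff $(K, K') \in \tt{Homomorphism}$. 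Since the reduction is clearly logspace computable, this establishes hardness.

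The only real obstacle is the choice of reduction: one must be sure that the restriction of $\tt{Homomorphism}$ to the single binary-relation signature is still an instance of the problem as stated, which it is, since the definition places no restriction on $\Sigma$ beyond arities. Combining the two parts gives $\tt{Homomorphism} \in \tt{NP\text{-}complete}$.
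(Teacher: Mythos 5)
Your proof is correct, and it differs from the paper's in both halves, though the differences are matters of route rather than substance. For membership the paper invokes Fagin's theorem: the existence of a homomorphism is expressible in existential second-order logic, hence the problem lies in $\tt{NP}$. You instead give the direct certificate argument --- guess the function $f$, verify the preservation condition tuple by tuple --- which is more elementary and self-contained, at the cost of not exhibiting the descriptive-complexity connection the paper gestures at. For hardness the paper reduces from graph homomorphism, observing that structures over the signature with a single binary symbol are exactly graphs, and leaves the $\tt{NP}$-hardness of graph homomorphism itself to the cited reference. You go one level deeper and reduce from $3$-colourability via the classical observation that a graph is $3$-colourable iff it maps homomorphically into $K_3$; since the hardness of graph homomorphism is standardly proved exactly this way, you have effectively inlined the paper's black box. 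Your version is the more complete argument; the paper's is the shorter, citation-level sketch. One cosmetic remark: the symmetry and irreflexivity of the edge relation play no role in your reduction --- any binary structure would serve --- so that aside can be dropped.
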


\begin{proof}
  Membership may be shown to follow from Fagin's theorem: homomorphisms are defined by an existential second-order logic formula.
  Hardness follows by reduction from graph homomorphism: take $\Sigma = \set{\bullet}$ and $\tt{ar}(\bullet) = 2$ then a $\Sigma$-structure is a graph.
\end{proof}

Regular logic formulae are defined by the following context-free grammar:
$$
\phi \s ::= \s \top \s\vert\s x = x' \s\vert\s \phi \land \phi \s\vert\s \exists \ x \cdot \phi \s\vert\s R(\vec{x})
$$
where $x, x' \in \cal{X}$, $R \in \Sigma$ and $\vec{x} \in \cal{X}^{\tt{ar}(R)}$ for some countable set of variables $\cal{X}$.
We denote the variables of $\phi$ by $\tt{var}(\phi) \sub \cal{X}$, its free variables by $\tt{fv}(\phi) \sub \tt{var}(\phi)$ and its atomic formulae by $\tt{atoms}(\phi) \sub \coprod_{R \in \Sigma} \text{var}(\phi)^{\tt{ar}(R)}$.
\emph{Conjunctive queries} $\phi \in \cal{Q}_\Sigma$ are the prenex normal form $\phi = \exists\ x_0 \cdots \exists \ x_k \cdot \phi'$ of regular logic formulae, for the bound variables $\set{x_0, \dots, x_k} = \tt{var}(\phi) \setminus \tt{fv}(\phi)$ and $\phi' = \bigwedge \tt{atoms}(\phi)$.
Given a structure $K \in \cal{M}_\Sigma$, let $\tt{eval}(\phi, K) = \set{v \in U(K)^{\tt{fv}(\phi)} \s \vert \s  (K, v) \vDash \phi}$ where the satisfaction relation $(\vDash)$ is defined in the usual way.

\begin{definition}
    \begin{problem}
    \problemtitle{$\tt{Evaluation}$}
    \probleminput{$\phi \in \cal{Q}_\Sigma, \quad K \in \cal{M}_\Sigma$}
    \problemoutput{$\tt{eval}(\phi, K) \sub U(K)^{\tt{fv}(\phi)}$}
  \end{problem}
\end{definition}

\begin{definition}
    \begin{problem}
    \problemtitle{$\tt{Containment}$}
    \probleminput{$\phi, \phi' \in \cal{Q}_\Sigma$}
    \problemoutput{$\phi \sub \phi' \s\equiv\s \forall \ K \in \cal{M}_\Sigma \ \cdot \ \tt{eval}(\phi, K) \sub \tt{eval}(\phi', K)$}
  \end{problem}
\end{definition}

\begin{definition}
  Given a query $\phi \in \cal{Q}_\Sigma$, the canonical structure $CM(\phi) \in \cal{M}_\Sigma$ is given by $U(CM(\phi)) = \tt{var}(\phi)$ and
  $CM(\phi)(R) = \set{\vec{x} \in \tt{var}(\phi)^{\tt{ar}(R)} \ \vert \ R(\vec{x}) \in \tt{atoms}(\phi)}$
  for $R \in \Sigma$.
\end{definition}

\begin{theorem}[Chandra-Merlin \cite{ChandraMerlin77}]\label{chandra-merlin}
  The problems $\tt{Evaluation}$ and $\tt{Containment}$ are logspace equivalent to $\tt{Homomorphism}$, hence $\tt{NP-complete}$.
\end{theorem}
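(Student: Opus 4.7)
The plan is to prove the theorem via the classical Chandra--Merlin correspondence between conjunctive queries and their canonical structures: a query $\phi$ and the structure $CM(\phi)$ carry the same information, and satisfaction in a structure $K$ translates into the existence of a homomorphism into $K$.

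The key lemma I would establish first is that for every $\phi \in \cal{Q}_\Sigma$, every $K \in \cal{M}_\Sigma$ and every valuation $v : \tt{fv}(\phi) \to U(K)$, we have $(K, v) \vDash \phi$ if and only if $v$ extends to a homomorphism $f : CM(\phi) \to K$. This is a straightforward induction on $\phi$: the atoms of $\phi$ are by construction the relational tuples of $CM(\phi)$, so preserving them under $f$ is the same as making the quantifier-free matrix of $\phi$ true, while the existentially quantified variables correspond precisely to the elements of $U(CM(\phi)) \setminus \tt{fv}(\phi)$ on which $f$ is free to choose values. This immediately gives $\tt{Evaluation} \leq_L \tt{Homomorphism}$: to decide whether a tuple $v$ lies in $\tt{eval}(\phi, K)$, query a homomorphism oracle on $CM(\phi) \to K'$, where $K'$ augments $K$ by fresh unary relations forcing each free variable of $\phi$ to be mapped to its prescribed value in $v$.

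For the converse I would translate any structure $K \in \cal{M}_\Sigma$ into the Boolean conjunctive query $\phi_K = \exists \ x_0 \cdots \exists \ x_{n-1} \cdot \bigwedge_{R \in \Sigma, \ \vec{y} \in K(R)} R(\vec{y})$ whose variables are the elements of $U(K)$; then $CM(\phi_K) = K$ up to renaming, and by the lemma there is a homomorphism $K \to K'$ iff $\tt{eval}(\phi_K, K')$ is non-empty, yielding $\tt{Homomorphism} \leq_L \tt{Evaluation}$. The same translation handles $\tt{Containment}$: applying the lemma with $K = CM(\phi)$ and $v$ the identity on $\tt{fv}(\phi)$, one sees that $\phi \sub \phi'$ iff there is a homomorphism $CM(\phi') \to CM(\phi)$ fixing the common free variables, which logspace reduces both ways to $\tt{Homomorphism}$ via the construction $K \mapsto \phi_K$. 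All four reductions are pure syntactic rewriting and manifestly in logspace, so $\tt{NP}$-completeness of $\tt{Evaluation}$ and $\tt{Containment}$ follows from that of $\tt{Homomorphism}$. The main delicate point is the bookkeeping around free variables on both sides of the equivalences, which I would handle uniformly by augmenting the signature with naming constants or unary relations that pin each free variable to a distinguished element; once this convention is in place, the rest is routine.
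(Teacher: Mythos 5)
Your proposal is correct and takes essentially the same route as the paper: both rest on the Chandra--Merlin correspondence identifying $\tt{eval}(\phi, K)$ with the set of homomorphisms $CM(\phi) \to K$, the homomorphism criterion between canonical structures for containment, and the inverse construction sending a structure $K$ to a Boolean query with $\tt{var}(\phi) = U(K)$ and $\tt{atoms}(\phi) = K$. You give more detail than the paper's sketch (in particular the bookkeeping that pins free variables via fresh unary relations), and you state the containment criterion with the homomorphism in the standard direction $CM(\phi') \to CM(\phi)$, whereas the paper's proof writes it as $f : CM(\phi) \to CM(\phi')$ --- your orientation is the correct one.
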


\begin{proof}
  Given a query $\phi \in \cal{Q}_\Sigma$ and a structure $K \in \cal{M}_\Sigma$, query evaluation $\tt{eval}(\phi, K)$ is given by the set of homomorphisms $CM(\phi) \to K$.
  Given $\phi, \phi' \in \cal{M}_\Sigma$, we have $\phi \sub \phi'$ iff there is a homomorphism $f : CM(\phi) \to CM(\phi')$ such that $f(\tt{fv}(\phi)) = \tt{fv}(\phi')$.
  Given a structure $K \in \cal{M}_\Sigma$, we construct $\phi \in \cal{Q}_\Sigma$ with $\tt{fv}(\phi) = \varnothing$, $\tt{var}(\phi) = U(K)$ and $\tt{atoms}(\phi) = K$.
\end{proof}

Bonchi, Seeber and Sobocinski \cite{BonchiEtAl18} introduced graphical conjunctive queries (GCQ), a graphical calculus where query containment is captured by the axioms of the \emph{free Cartesian bicategory} $\bf{CB}(\Sigma)$ generated by the relational signature $\Sigma$.

\begin{definition}[Carboni-Walters \cite{CarboniWalters87}]\label{def-CB}
  A Cartesian bicategory is a symmetric monoidal category enriched in partial orders such that:
  \begin{enumerate}
    \item every object is equipped with a special commutative Frobenius algebra,
    \item the monoid and comonoid structure of each Frobenius algebra are adjoint,
    \item every arrow is a lax comonoid homomorphism.
    \end{enumerate}
  A morphism of Cartesian bicategories is a strong monoidal functor which preserves the partial order, the monoid and the comonoid structure.
\end{definition}

\begin{theorem}(\cite[prop.~9,10]{BonchiEtAl18})\label{translation}
  Let $\bf{CB}(\Sigma)$ be the free Cartesian bicategory generated by one object and arrows $\set{ R : 0 \to \tt{ar}(R)}_{R \in \Sigma}$, see \cite[def.~21]{BonchiEtAl18}.
  There is a two-way semantics-preserving translation $\Theta : \cal{Q}_\Sigma \to \bf{CB}(\Sigma)$, $\Lambda : \bf{CB}(\Sigma) \to \cal{Q}_\Sigma$, i.e. for all $\phi, \phi' \in \cal{Q}_\Sigma$ we have $\phi \sub \phi' \iff \Theta(\phi) \leq \Theta(\phi')$,
  and for all arrows $d, d' \in \bf{CB}(\Sigma)$, $d \leq d' \iff \Lambda(d) \sub \Lambda(d')$.
\end{theorem}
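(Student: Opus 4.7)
The plan is to build the two translations by structural induction and then reduce the comparison of orders to the Chandra-Merlin homomorphism characterisation of query containment.

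First I would define $\Theta : \cal{Q}_\Sigma \to \bf{CB}(\Sigma)$ by induction on the regular logic grammar, with each variable $x \in \tt{var}(\phi)$ carrying a wire. Atomic formulae $R(x_1, \dots, x_n)$ become the generating arrow $R : 0 \to \tt{ar}(R)$, connected through the Frobenius comultiplication to the wires of the $x_i$. Equality $x = x'$ becomes the identification of the two wires, realised by the Frobenius multiplication. Conjunction $\phi \land \psi$ corresponds to the parallel juxtaposition of the two diagrams, with shared variables glued via the Frobenius comultiplication on the shared wires. Existential quantification $\exists\, x \cdot \phi$ discards the wire corresponding to $x$ using the Frobenius counit. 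Explicitly, $\Theta(\phi) : 0 \to \size{\tt{fv}(\phi)}$, with inputs matched to free variables. The fact that $\Theta$ is well-defined up to $\alpha$-renaming and the regular logic equalities follows from the Carboni-Walters axioms in Definition \ref{def-CB} (commutativity, associativity, Frobenius identity, unit and counit laws, lax naturality).

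Next I would define $\Lambda : \bf{CB}(\Sigma) \to \cal{Q}_\Sigma$ by reading each diagram $d : m \to n$ as a conjunctive query with $m + n$ distinguished free variables (one per input/output wire). Every internal wire yields an existentially quantified variable, every generator $R$ contributes an atomic formula $R(\vec{x})$ on its adjacent wires, Frobenius nodes generate equalities identifying incident variables, and counits correspond to extra existentials. One then checks by case analysis on the Cartesian-bicategory axioms that the output is invariant: the Frobenius monoid laws exactly mirror equality and conjunction idempotence, and the counit laws correspond to discarding unused bound variables.

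For semantics preservation, I would route everything through Theorem \ref{chandra-merlin}. On the query side, $\phi \sub \phi'$ iff there is a homomorphism of canonical structures $CM(\phi) \to CM(\phi')$ fixing free variables. On the diagrammatic side, by Carboni and Walters, $\bf{CB}(\Sigma)$ has a concrete description in which inequalities $d \leq d'$ are precisely witnessed by morphisms of the underlying hypergraphs of the diagrams. The two notions coincide under $\Theta$ and $\Lambda$: $CM(\Lambda(d))$ and the hypergraph of $d$ are the same data, and $CM(\phi)$ is reconstructible from $\Theta(\phi)$ up to the axioms. Thus $\phi \sub \phi' \iff \Theta(\phi) \leq \Theta(\phi')$, and symmetrically for $\Lambda$. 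Finally, I would verify that $\Lambda \circ \Theta$ and $\Theta \circ \Lambda$ are semantically the identity, completing the two-way translation.

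The main obstacle is the soundness-and-completeness step: showing that the inequational theory of Cartesian bicategories is exactly query containment. Soundness (that every axiom is valid under the order induced by $\tt{eval}$) is a routine check on each generator, but completeness requires a normal-form argument for arrows of $\bf{CB}(\Sigma)$, bringing every diagram into a canonical ``hypergraph'' shape on which inequalities reduce to hypergraph morphisms. This is precisely the work done in \cite{BonchiEtAl18}; I would rely on their normalisation lemma to close the gap rather than redoing it from scratch.
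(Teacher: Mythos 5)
Your proposal is correct and follows essentially the same route as the paper, which likewise defines $\Theta$ and $\Lambda$ by induction on the syntax of regular logic formulae and of GCQ diagrams respectively, and defers the hard soundness-and-completeness step for the inequational theory of $\bf{CB}(\Sigma)$ to Bonchi, Seeber and Sobocinski \cite{BonchiEtAl18}. The only slip is cosmetic: since $\Theta(\phi) \in \bf{CB}(\Sigma)(0, \size{\tt{fv}(\phi)})$, the free variables correspond to \emph{output} wires rather than inputs.
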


\begin{proof}
  The translation is defined by induction from the syntax of regular logic formulae to that of GCQ diagrams and back. Note that given $\phi \in \cal{Q}_\Sigma$ with $\size{\tt{fv}(\phi)} = n$, we have $\Theta(\phi) \in \bf{CB}(\Sigma)(0, n)$ and similarly we have $\tt{fv}(\Lambda(d)) = m + n$ for $d \in \bf{CB}(\Sigma)(m, n)$, i.e. open wires correspond to free variables.
\end{proof}

The category $\bf{Rel}$ of sets and relations with Cartesian product as tensor, singleton as unit, the diagonal and its transpose as monoid and comonoid, subset as partial order, is a Cartesian bicategory.
Given a set $U$, the subcategory $\bf{Rel}\vert_U \injects \bf{Rel}$ with natural numbers $m, n \in \N$ as objects and relations $R \sub U^{m + n}$ as arrows is also a Cartesian bicategory. It is furthermore a PROP, i.e. a symmetric monoidal category with addition of natural numbers as tensor on objects.

\begin{proposition}(\cite[prop.~23]{BonchiEtAl18})\label{structure-functor-bijection}
  Structures $K \in \cal{M}_\Sigma(U)$ are in bijective correspondence with identity-on-objects morphisms of Cartesian bicategories $K : \bf{CB}(\Sigma) \to \bf{Rel}\vert_U$.
\end{proposition}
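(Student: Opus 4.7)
The plan is to apply the universal property of the free Cartesian bicategory. By its presentation, $\bf{CB}(\Sigma)$ is generated by a single object together with arrows $\set{R : 0 \to \tt{ar}(R)}_{R \in \Sigma}$, subject only to the axioms of Definition \ref{def-CB}. Hence giving a morphism of Cartesian bicategories $K : \bf{CB}(\Sigma) \to \bf{D}$ into any Cartesian bicategory $\bf{D}$ amounts to choosing the image of the generating object and, for each $R \in \Sigma$, an arrow in $\bf{D}$ with matching profile; the preservation of the Frobenius structure, the partial order, and the symmetric monoidal structure is then forced by freeness.

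Taking $\bf{D} = \bf{Rel}\vert_U$, which has just been observed to be a Cartesian bicategory (and a PROP) with natural numbers as objects and relations $R \sub U^{m+n}$ as arrows $m \to n$, the identity-on-objects condition forces the generating object of $\bf{CB}(\Sigma)$ to be sent to $1$, so that each $n$-fold tensor is sent to $n$. The remaining freedom is then precisely the choice, for each $R \in \Sigma$, of an arrow $K(R) : 0 \to \tt{ar}(R)$ in $\bf{Rel}\vert_U$, i.e.\ a subset $K(R) \sub U^{\tt{ar}(R)}$. Unpacking the definition of $\cal{M}_\Sigma(U)$ given earlier, the collection $\set{K(R) \sub U^{\tt{ar}(R)}}_{R \in \Sigma}$ is exactly the data of a $\Sigma$-structure with universe $U$.

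The two directions of the bijection are then immediate: from a $\Sigma$-structure $K$, the universal property produces a unique identity-on-objects morphism extending the generator assignment $R \mapsto K(R)$; conversely, any such morphism restricts to such a family on the generators, and the two composites agree with the identity by the uniqueness of the universal extension.

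The main subtlety is not producing the bijection itself but confirming that no additional coherence obligations arise from the generator assignment: one must check that the Frobenius maps, the partial order, and the lax comonoid homomorphism condition on arrows are all freely generated in $\bf{CB}(\Sigma)$, so that any choice of relations $K(R)$ extends to a structure-preserving functor without further constraints. This is the content of the explicit presentation in \cite[def.~21]{BonchiEtAl18}, which is the place one would consult to discharge this step in full detail.
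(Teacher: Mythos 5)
Your proof is correct and follows essentially the same route as the paper's: both invoke the universal property of the free Cartesian bicategory to reduce an identity-on-objects morphism into $\mathbf{Rel}\vert_U$ to its action on the generators $R \in \Sigma$, which is exactly the data of a $\Sigma$-structure. Your version is simply more explicit about the identity-on-objects bookkeeping and about deferring the freeness of the presentation to \cite[def.~21]{BonchiEtAl18}, which is also where the paper implicitly rests.
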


\begin{proof}
  By the universal property of the free Cartesian bicategory, an identity-on-objects morphism $K : \bf{CB}(\Sigma) \to \bf{Rel}\vert_U$ is uniquely determined by its image on generators $\set{K(R) \sub U^{\tt{ar}(R)}}_{R \in \Sigma}$: this is precisely the data for a $\Sigma$-structure.
\end{proof}

\begin{corollary}\label{isomorphisms}
  Let $[\bf{CB}(\Sigma), \bf{Rel}]$ denote the set of morphisms of Cartesian bicategories, there are bijective correspondences:
  $$\bf{CB}(\Sigma)(0,0)
  \s\stackrel{(1)}{\simeq}\s \set{\phi \in \cal{Q}_\Sigma \ \vert\ \tt{fv}(\phi) = \varnothing}
  \s\stackrel{(2)}{\simeq}\s \cal{M}_\Sigma
  \s\stackrel{(3)}{\simeq}\s [\bf{CB}(\Sigma), \bf{Rel}]$$
\end{corollary}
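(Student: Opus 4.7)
The plan is to establish each of the three bijections separately, each time reusing a correspondence already on the table and tracking what the restriction in question amounts to.

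For (1), apply Theorem \ref{translation} directly. The translations $\Theta, \Lambda$ preserve semantics in both directions, so the mutual containments $\Theta(\Lambda(d)) \leq d$, $d \leq \Theta(\Lambda(d))$ collapse to equalities via antisymmetry of the partial-order enrichment of $\bf{CB}(\Sigma)$, while $\Lambda(\Theta(\phi))$ is logically equivalent to $\phi$ -- i.e. they are inverse on the quotient by equivalence on both sides. Since $\Theta$ sends a query with $n$ free variables to an arrow $0 \to n$, restricting to $\bf{CB}(\Sigma)(0,0)$ singles out precisely the closed queries $\set{\phi \in \cal{Q}_\Sigma \ \vert \ \tt{fv}(\phi) = \varnothing}$.

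For (2), recycle the two constructions already used in the proof of Theorem \ref{chandra-merlin}. In one direction send a closed query $\phi$ to the canonical structure $CM(\phi)$; in the other direction, send $K \in \cal{M}_\Sigma$ to the closed query $\phi_K$ with $\tt{var}(\phi_K) = U(K)$, $\tt{fv}(\phi_K) = \varnothing$ and $\tt{atoms}(\phi_K) = K$. A direct check shows these are mutually inverse up to renaming of bound variables and relabelling of the universe, which is the natural notion of equality on both sides.

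For (3), apply Proposition \ref{structure-functor-bijection} and tie the choice of universe to the functor. Given $K \in \cal{M}_\Sigma$, compose the identity-on-objects morphism $K : \bf{CB}(\Sigma) \to \bf{Rel}\vert_{U(K)}$ with the inclusion $\bf{Rel}\vert_{U(K)} \injects \bf{Rel}$ to obtain an object of $[\bf{CB}(\Sigma), \bf{Rel}]$. Conversely, given $F : \bf{CB}(\Sigma) \to \bf{Rel}$, set $U := F(1)$; since $\bf{CB}(\Sigma)$ is a PROP generated by $1$ and $F$ is strong monoidal, the canonical iso $F(n) \cong U^n$ lets $F$ factor as an identity-on-objects morphism $\bf{CB}(\Sigma) \to \bf{Rel}\vert_U$ followed by $\bf{Rel}\vert_U \injects \bf{Rel}$, and Proposition \ref{structure-functor-bijection} converts that first factor into a unique $\Sigma$-structure with universe $U$.

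The main obstacle is the factorisation step in (3): one must argue that the strong monoidality iso $F(n) \cong U^n$ is canonical enough that the resulting structure does not depend on a choice. This is guaranteed by the PROP structure of $\bf{CB}(\Sigma)$ (which forces $n$ to be the $n$-fold tensor of the generating object) together with the Cartesian bicategory axioms of Definition \ref{def-CB} (which pin down how each generator $R : 0 \to \tt{ar}(R)$ interacts with the Frobenius structure), so that the image of $R$ is a well-defined subset of $U^{\tt{ar}(R)}$ independent of the coherence iso.
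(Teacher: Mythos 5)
Your proof is correct and follows essentially the same route as the paper, which simply cites Theorem~\ref{translation} for (1), Theorem~\ref{chandra-merlin} for (2) and Proposition~\ref{structure-functor-bijection} for (3). The only difference is that you additionally spell out how a morphism $F : \bf{CB}(\Sigma) \to \bf{Rel}$ determines its universe $U = F(1)$ and factors through $\bf{Rel}\vert_U$ --- a detail the paper's one-line proof leaves implicit.
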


\begin{proof}
  (1) follows from theorem \ref{translation}, (2) from theorem \ref{chandra-merlin} and (3) is proposition \ref{structure-functor-bijection}.
\end{proof}

\section{Relational Semantics and Natural Language Entailment}
Relational models are rigid monoidal functors $F : \bf{G} \rightarrow \bf{Rel}$ for $\bf{G}$ the rigid monoidal category generated by a pregroup grammar $G = (V, B, \Delta)$.
We require that the image for words $w \in V$ be the singleton $F(w) = 1$, hence the image for a dictionnary entry $(w, t) \in \Delta$ is given by a subset $F(w \to t) \sub F(t)$.
Without loss of generality, we assume that the image of $F$ lies in $\bf{Rel}\vert_U$ for some finite universe $U$, which may be taken to be the union of the universe for each basic type $U = \bigcup_{b \in B} F(b)$.

\begin{lemma}\label{lemma-rel-models}
  A relational model $F : \bf{G} \to \bf{Rel}\vert_U$ is uniquely determined by its image on basic types and on dictionnary entries, i.e. by a function $\tt{ar} : B \to \N$ and a subset $F(w \to t) \sub U^{\tt{ar}(t)}$ for each $(w, t) \in \Delta$.
  Thus, $F$ induces a structure $K \in \cal{M}_\Delta(U)$ over the dictionnary seen as a signature where entries $(w, t) \in \Delta$ are symbols of arity $F(t) \in \N$.
\end{lemma}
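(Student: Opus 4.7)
The plan is to invoke the universal property of the free rigid monoidal category $\bf{G}$. Since $\bf{G}$ is freely generated by the autonomous signature whose generating objects are $V + B$ and whose generating arrows are the dictionary entries $\set{w \to t}_{(w, t) \in \Delta}$, any rigid monoidal functor $F : \bf{G} \to \bf{Rel}\vert_U$ is uniquely determined by, and may be freely specified on, these generators. The proof then reduces to checking that the resulting data is exactly that of a $\Delta$-structure.

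First I would handle the objects. The hypothesis $F(w) = 1$ fixes the image of each $w \in V$ to be the monoidal unit of $\bf{Rel}\vert_U$ (the object $0 \in \N$, which under the embedding $\bf{Rel}\vert_U \injects \bf{Rel}$ corresponds to the singleton $U^0$). The only remaining object data is a value $F(b) \in \N$ for each $b \in B$, giving precisely a function $\tt{ar} : B \to \N$. This extends uniquely to all of $P(B)$: monoidality forces $\tt{ar}(t_1 t_2) = \tt{ar}(t_1) + \tt{ar}(t_2)$, while rigidity together with the self-duality of every object of the PROP $\bf{Rel}\vert_U$ (each $n \in \N$ satisfies ${}^\star n = n = n^\star$ with the canonical cups and caps inherited from $U^n$) forces $\tt{ar}({}^\star t) = \tt{ar}(t) = \tt{ar}(t^\star)$. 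Hence $F(t) = \tt{ar}(t)$ for every $t \in P(B)$.

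Next I would handle the arrows. Each dictionary entry $(w, t) \in \Delta$ yields a generating morphism $w \to t$ in $\bf{G}$, whose image under $F$ is an arrow from $F(w) = 0$ to $F(t) = \tt{ar}(t)$ in $\bf{Rel}\vert_U$. By definition of $\bf{Rel}\vert_U$, such an arrow is precisely a subset $F(w \to t) \sub U^{\tt{ar}(t)}$. Bundling this data and regarding $\Delta$ as a relational signature with arity function $(w, t) \mapsto \tt{ar}(t)$, we obtain a $\Delta$-structure $K \in \cal{M}_\Delta(U)$ by setting $K(w, t) := F(w \to t)$.

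I do not anticipate a serious obstacle: the statement is essentially a repackaging of the universal property of free rigid monoidal categories, followed by reading off a $\Delta$-structure from the resulting data. The only point deserving care is verifying that rigidity imposes no hidden constraints on $F$ beyond the data $(\tt{ar}, \set{F(w \to t)}_{(w,t) \in \Delta})$; this is guaranteed by the self-duality of the target PROP, which canonically dictates the extension of $F$ to adjoint types and to the cup and cap morphisms.
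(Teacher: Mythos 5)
Your overall route is the same as the paper's: invoke the universal property of the free rigid category $\bf{G}$ so that $F$ is determined by its values on the generating objects $V + B$ and the generating arrows, then observe that the object data collapses to an arity function $\tt{ar} : B \to \N$ (using $F(w) = 1$ and self-duality in $\bf{Rel}\vert_U$) and the arrow data to subsets $F(w \to t) \sub U^{\tt{ar}(t)}$, which is exactly a $\Delta$-structure. Your elaboration of the object-level bookkeeping (monoidality forcing $\tt{ar}(t_1 t_2) = \tt{ar}(t_1) + \tt{ar}(t_2)$, self-duality forcing $\tt{ar}({}^\star t) = \tt{ar}(t) = \tt{ar}(t^\star)$) is correct and more explicit than the paper's one-line appeal to the universal property.

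There is, however, one ingredient you omit that the paper relies on: Lemma~\ref{get-rid-of-induced-steps}. When the poset $B$ of basic types has non-trivial order (as in Example~\ref{example0}, where $n \leq i$ and $n \leq o$), the reductions $t \leq s$ witnessing grammaticality use \emph{induced steps} $a \to b$ for $a \leq b$ in $B$, so the diagrams $r \in \bf{G}(u,s)$ contain coercion arrows beyond cups, caps and dictionary entries. A rigid functor out of $\bf{G}$ would then also have to be specified on these coercions, and the data $(\tt{ar}, \set{F(w \to t)}_{(w,t) \in \Delta})$ would \emph{not} uniquely determine $F$. The paper resolves this by first passing to the equivalent grammar $G'$ of Lemma~\ref{get-rid-of-induced-steps}, whose basic types form a discrete poset, so that the only generating arrows really are the dictionary entries. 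Your closing remark that ``rigidity imposes no hidden constraints'' addresses the cups and caps but not this point; you should either cite that lemma or restrict to a discrete poset of basic types at the outset.
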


\begin{proof}
  This follows from lemma~\ref{get-rid-of-induced-steps} and the universal property of the free rigid category: the functor $F : \bf{G} \to \bf{Rel}\vert_U$ is uniquely defined by its image on generators, i.e. on the basic types $b \in B$ and the dictionnary entries $(w, t) \in \Delta$ seen as generating arrows $w \to t$.
\end{proof}

Note that the data defining a relational model is a finite set of triples
$\set{w : t :: F(w \to t)}_{(w, t) \in \Delta}$, called a \emph{pregroup lexicon} in \cite{Preller14}.
This allows us to define $\tt{Semantics}$ as a function problem with relational models as input.

\begin{definition}\label{def-semantics}
    \begin{problem}
    \problemtitle{$\tt{Semantics}$}
    \probleminput{$r \in \bf{G}(u, s), \quad F : \bf{G} \to \bf{Rel}\vert_U$}
    \problemoutput{$F(r) \sub U^{F(s)}$}
  \end{problem}
\end{definition}

\begin{lemma}\label{lemma-factorisation}
  Every relational model $F : \bf{G} \to \bf{Rel}\vert_U$ factorises as $F = K \circ L$ for a relational structure $K \in \cal{M}_\Delta(U)$ and a rigid monoidal functor $L : \bf{G} \to \bf{CB}(\Delta)$.
\end{lemma}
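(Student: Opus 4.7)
The plan is to combine the universal property of $\bf{G}$ as the free rigid monoidal category over the autonomous signature $\Delta$ with Proposition~\ref{structure-functor-bijection} to obtain $K$ from the structure induced by $F$.

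First I would use Lemma~\ref{lemma-rel-models} to extract the $\Delta$-structure $K_0 \in \cal{M}_\Delta(U)$ associated to $F$, where each dictionary entry $(w,t) \in \Delta$ is interpreted as a symbol of arity $F(t)$ with $K_0((w,t)) = F(w \to t) \sub U^{F(t)}$. Proposition~\ref{structure-functor-bijection} then yields a unique identity-on-objects Cartesian bicategory morphism $K : \bf{CB}(\Delta) \to \bf{Rel}\vert_U$ matching this data.

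Next I would construct $L$ by invoking the universal property of $\bf{G}$: it suffices to assign an object of $\bf{CB}(\Delta)$ to each generating object of $\bf{G}$ and an arrow to each generator, compatibly with the rigid structure. Writing $g$ for the unique generating object of $\bf{CB}(\Delta)$, set $L(b) = g^{\tt{ar}(b)}$ on basic types $b \in B$ (using the arities of Lemma~\ref{lemma-rel-models}) and $L(w) = 0$ on words $w \in V$. For a dictionary entry $(w,t) \in \Delta$ with $t = b_0 \cdots b_{n-1}$, monoidality forces $L(t) = g^{F(t)}$, and we set $L(w \to t) : 0 \to g^{F(t)}$ to be the generator of $\bf{CB}(\Delta)$ indexed by $(w,t)$. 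Since $\bf{CB}(\Delta)$ carries a special commutative Frobenius algebra on each object (definition~\ref{def-CB}), every object is self-dual, with cups and caps derived from the Frobenius structure satisfying the snake equations; hence we can set $L({}^\star b) = L(b^\star) = L(b)$ and send the cups and caps of $\bf{G}$ to the Frobenius cups and caps of $\bf{CB}(\Delta)$, making $L$ a well-defined rigid monoidal functor.

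Finally I would check that $K \circ L = F$ by comparing values on generators and appealing again to the universal property of $\bf{G}$: on basic types $K(L(b)) = K(g^{\tt{ar}(b)}) = U^{F(b)} = F(b)$ since $K$ is identity-on-objects into $\bf{Rel}\vert_U$; on words $K(L(w)) = K(0) = 1 = F(w)$; on a dictionary entry $K(L(w \to t)) = K((w,t)) = F(w \to t)$ by construction. Both $F$ and $K \circ L$ are rigid monoidal functors agreeing on generators, hence they coincide.

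The main obstacle is to justify that $L$ is genuinely rigid monoidal, i.e.\ that the assignments on cups and caps are consistent. This reduces to checking that the cups and caps derived from the Frobenius structure in $\bf{CB}(\Delta)$ satisfy the snake equations of Definition~\ref{def-rigid}. This follows from the Frobenius law together with specialness (equivalently, from the adjointness of monoid and comonoid in a Cartesian bicategory, clause~2 of Definition~\ref{def-CB}); after this verification the rest is a routine diagram chase in the free rigid category.
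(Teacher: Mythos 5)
Your proposal is correct and follows the same route as the paper: extract the $\Delta$-structure via Lemma~\ref{lemma-rel-models}, obtain $K$ from Proposition~\ref{structure-functor-bijection}, define $L$ on generators by sending each dictionary entry to its relational symbol, and conclude $F = K \circ L$ by the universal property of the free rigid category. You simply spell out the details the paper compresses into ``by construction'' --- in particular the self-duality of objects in $\bf{CB}(\Delta)$ via the Frobenius structure, which is indeed the point that makes $L$ rigid (and for which the Frobenius law together with the (co)unit axioms already suffices, without specialness).
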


\begin{proof}
  By propositions \ref{structure-functor-bijection} and \ref{lemma-rel-models}, $F : \bf{G} \to \bf{Rel}\vert_U$ induces a morphism of Cartesian bicategories $K : \bf{CB}(\Delta) \to \bf{Rel}\vert_U$.
  The functor $L : \bf{G} \to \bf{CB}(\Delta)$ sends each dictionnary entry to itself as a relational symbol, by construction we have $K \circ L = F$.
\end{proof}

\begin{proposition}
  There is a logspace reduction from $\tt{Semantics}$ to conjunctive query $\tt{Evaluation}$, hence $\tt{Semantics} \in \tt{NP}$.
\end{proposition}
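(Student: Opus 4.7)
The plan is to chain the factorisation of Lemma~\ref{lemma-factorisation} with the graphical-to-logical translation of Theorem~\ref{translation}. Given an input $(r, F)$ of $\tt{Semantics}$ with $r \in \bf{G}(u, s)$ and $F : \bf{G} \to \bf{Rel}\vert_U$, I would first read off the $\Delta$-structure $K \in \cal{M}_\Delta(U)$ associated to $F$ by Lemma~\ref{lemma-rel-models}: this amounts to interpreting each dictionary entry $(w,t) \in \Delta$ as a relational symbol of arity $F(t) \in \N$ together with its extension $F(w \to t) \sub U^{F(t)}$, which is already how $F$ is encoded in the input. This step is clearly logspace-computable, as it is essentially a re-tagging of the input data.

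Next, I would apply the free rigid functor $L : \bf{G} \to \bf{CB}(\Delta)$ of Lemma~\ref{lemma-factorisation} to the diagram $r$. Since $L$ sends each word $w \in V$ to $0$ (words have $F(w) = 1$, i.e. the monoidal unit), each basic type $b \in B$ to $\tt{ar}(b)$, and each dictionary entry $(w,t)$ to the generating arrow $0 \to F(t)$ of $\bf{CB}(\Delta)$, while cups and caps map to the Frobenius-derived cups and caps in $\bf{CB}(\Delta)$, this is a local, box-by-box relabelling of the string diagram $r$. The result is a diagram $L(r) \in \bf{CB}(\Delta)(0, F(s))$, computable in logspace. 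Applying the translation $\Lambda$ from Theorem~\ref{translation} then yields a conjunctive query $\phi = \Lambda(L(r)) \in \cal{Q}_\Delta$ with $F(s)$ free variables, one per open wire; since $\Lambda$ is defined by structural induction on the diagram, this is again logspace.

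The output of the original problem is $F(r) \sub U^{F(s)}$. By Lemma~\ref{lemma-factorisation} we have $F(r) = K(L(r))$, and the content of Theorem~\ref{translation} together with Proposition~\ref{structure-functor-bijection} is that $K$, viewed as a morphism of Cartesian bicategories, interprets the diagram $L(r)$ precisely by the tuples satisfying $\Lambda(L(r))$ in $K$, i.e.\ $K(L(r)) = \tt{eval}(\phi, K)$. Thus $(\phi, K)$ is an $\tt{Evaluation}$ instance with output exactly $F(r)$, completing the logspace reduction. Membership in $\tt{NP}$ then follows from Theorem~\ref{chandra-merlin}.

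The main obstacle I anticipate is justifying the identification $K(L(r)) = \tt{eval}(\phi, K)$ rigorously: Theorem~\ref{translation} is stated as a preservation of \emph{containment} between queries and the partial order on $\bf{CB}(\Sigma)$-arrows, rather than as a pointwise equality of extensions under a fixed structure. To bridge this one must use that $\Lambda$ is constructed compositionally from the very Frobenius and relational generators that a morphism of Cartesian bicategories $K$ preserves, so that the $\bf{Rel}$-semantics of $L(r)$ under $K$ and the Tarskian semantics of $\Lambda(L(r))$ under $K$ qua $\Delta$-structure coincide; this is essentially Corollary~\ref{isomorphisms} applied to $L(r)$ post-composed with projection to scalars. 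Once this identification is in hand, the logspace bound on each step of the reduction is routine.
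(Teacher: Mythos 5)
Your proposal is correct and follows essentially the same route as the paper's own (one-sentence) proof: factorise $F = K \circ L$ via Lemma~\ref{lemma-factorisation}, set $\phi = \Lambda(L(r))$ via Theorem~\ref{translation}, and observe that $\tt{eval}(\phi, K) = F(r)$ with all steps in logspace. Your closing discussion of why $K(L(r)) = \tt{eval}(\phi, K)$ holds is in fact more careful than the paper, which asserts this equality without comment.
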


\begin{proof}
  The factorisation $K \circ L = F$ of lemma~\ref{lemma-factorisation} and the translation $\Lambda$ of theorem~\ref{translation} are in logspace, they give a query $\phi = \Lambda (L(r)) \in \cal{Q}_\Delta$ such that $\tt{eval}(\phi, K) = F(r)$.
\end{proof}

We conjecture that the constraint language induced by a pregroup grammar meets the tractability condition for the CSP dichotomy theorem \cite{Bulatov17}.

\begin{conjecture}
  Fix $G = (V, B, \Delta)$, $\tt{Semantics}$ is poly-time computable in the size of $(u, s) \in List(V) \times P(B)$ and in the size of the universe $U$.
\end{conjecture}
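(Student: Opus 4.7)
The plan is to combine the factorisation of Lemma~\ref{lemma-factorisation} with the CSP dichotomy theorem, exploiting that fixing $G$ fixes both the signature $\Delta$ and the arities of all relation symbols. Given input $(r, F)$ with $r \in \bf{G}(u, s)$, I would first extract the induced structure $K \in \cal{M}_\Delta(U)$ from $F$ and the query $\phi = \Lambda(L(r)) \in \cal{Q}_\Delta$; by construction $F(r) = \tt{eval}(\phi, K)$. By Theorem~\ref{chandra-merlin} this equals the set of homomorphisms $CM(\phi) \to K$ extending the free variables, so the task reduces to a conjunctive query / CSP instance whose query has size polynomial in $|u|+|s|$ and whose template has size polynomial in $|U|$.

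The next step is to place this homomorphism problem inside a tractable class. The CSP-dichotomy route is to fix $K$ as a template over the finite domain $U$ and argue that the constraint language $\Gamma_F = \set{F(w \to t) \ \vert \ (w, t) \in \Delta}$ admits a nontrivial Taylor polymorphism, so that $\mathrm{CSP}(\Gamma_F) \in \tt{P}$ uniformly in the size of the universe. A complementary, instance-side approach would establish that the canonical structures $CM(\phi)$ arising from pregroup parses have bounded tree-width (or hypertree-width), invoking Grohe's theorem to obtain polynomial evaluation in $|U|$. Parse diagrams in $\bf{G}$ are planar and consist of dictionary triangles connected by non-crossing cups, which is strongly suggestive of such a bound.

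The main obstacle is that neither route is automatic. On the template side, $F$ is part of the input, so one cannot hope to exhibit a single polymorphism that works for every $F$; the conjecture must be refined to a restricted class of semantically meaningful models, plausibly those arising from corpora as in the $\tt{QuestionAnswering}$ construction of the next section. On the instance side, nested relative clauses and coordination can in principle produce long-distance dependencies that defeat a naive tree-width bound; a careful combinatorial analysis of the pregroup reduction topology, using the switching lemma to restrict to normal forms in which all cups precede all expansions, would be required. I would attack the instance-side argument first, as it reduces the conjecture to combinatorics on planar cup-cap diagrams and avoids any semantic commitment, falling back on a template-side polymorphism argument only if bounded width provably fails for natural fragments.
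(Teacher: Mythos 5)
The statement you are addressing is labelled a \emph{conjecture} in the paper: the authors give no proof, only the one-line remark that they expect the constraint language induced by a pregroup grammar to meet the tractability condition of the CSP dichotomy theorem. Your proposal is therefore not being measured against a proof in the paper, and, as you yourself concede, it is not a proof either --- it is a plan with two candidate routes, neither of which you close. The set-up is correct and matches the paper's own machinery: by lemma~\ref{lemma-factorisation} and theorems~\ref{translation} and~\ref{chandra-merlin}, computing $F(r)$ reduces to counting homomorphisms $CM(\phi) \to K$ with $\phi = \Lambda(L(r))$, a query of size polynomial in $\size{u} + \size{s}$ against a template of size polynomial in $\size{U}$. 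The genuine gaps begin after that.

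On the template side, the obstacle you name is fatal to that route as stated, not merely inconvenient: in the $\tt{Semantics}$ problem the model $F$ (hence the universe $U$ and every relation $F(w \to t)$) is part of the input, so there is no fixed constraint language to which the Bulatov--Zhuk dichotomy applies; nothing in the definition of a relational model prevents $F$ from interpreting a dictionary entry as, say, the disequality relation on a three-element universe, in which case tractability cannot come from polymorphisms of the template and must come from the shape of the queries. That makes your instance-side route the only viable one, and there the missing step is precise: you would need to prove that, for fixed $G$, the family $\set{CM(\Lambda(L(r))) \mid r \in \bf{G}(u,s),\ u \in List(V)}$ has tree-width (or hypertree-width) bounded by a constant depending only on $G$, after which Dalmau--Kolaitis--Vardi or Grohe's theorem for the bounded-arity signature $\Delta$ gives the claimed polynomial bound. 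Planarity of the parse diagram does not by itself give this --- the Frobenius/spider components introduced by functional words identify wires and can in principle create dense incidence structure --- and by Grohe's converse the conjecture is essentially \emph{equivalent} (modulo $\mathtt{FPT} \neq \mathtt{W[1]}$ and closure under cores) to such a width bound. So your prioritisation of the instance-side combinatorics is the right call, but until that bound is established for the reduction topology of a fixed pregroup grammar, the conjecture remains open and your text should be presented as a programme, not a proof.
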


We can also consider \emph{free} models, i.e. rigid monoidal functors $L : \bf{G} \to \bf{C}$ for a finitely presented Cartesian bicategory $\bf{C}$.
For example, take $\bf{C}$ to be generated by the signature $\Sigma = \set{\text{Leib, Spin, phil, \dots}}$ as 1-arrows with codomain given by the function $\tt{ar} : \Sigma \to \N$ and the following set of 2-arrows:
\begin{equation*}
  \begin{tikzpicture}[scale=0.6]
  		\node (0) at (-3, 0) {};
  		\node (1) at (-1, 0) {};
  		\node (2) at (-2, 1) {};
  		\node (3) at (0, 0) {};
  		\node (4) at (1, 1) {};
  		\node (5) at (2, 0) {};
  		\node (6) at (4, 0) {};
  		\node (7) at (5, 1) {};
  		\node (8) at (6, 0) {};
  		\node (9) at (5, 0.25) {phil};
  		\node (10) at (1, 0.25) {calc};
  		\node (11) at (-2, 0.25) {disc};
  		\node (12) at (-2.5, 0) {};
  		\node (13) at (-1.5, 0) {};
  		\node (14) at (1, 0) {};
  		\node (15) at (5, 0) {};
  		\node (16) at (-2.5, -1) {};
  		\node (17) at (5, -1) {};
  		\node (18) at (3, 0) {$\leq$};
  		\node (19) at (-5, 0) {};
  		\node (20) at (-6, 1) {};
  		\node (21) at (-7, 0) {};
  		\node (22) at (-6, 0) {};
  		\node (23) at (-6, -1) {};
  		\node (24) at (-6, 0.25) {Leib};
  		\node (25) at (-4, 0) {$\leq$};
  		\node (26) at (-10, 0) {};
  		\node (27) at (-11, 1) {};
  		\node (28) at (-12, 0) {};
  		\node (29) at (-11.5, 0) {};
  		\node (30) at (-10.5, 0) {};
  		\node (31) at (-10.5, -1) {};
  		\node (32) at (-11.5, -1) {};
  		\node (33) at (-15, 1) {};
  		\node (34) at (-14, 0) {};
  		\node (35) at (-15.5, 0) {};
  		\node (36) at (-16, 0) {};
  		\node (37) at (-15.5, -1) {};
  		\node (38) at (-14.5, 0) {};
  		\node (39) at (-14.5, -1) {};
  		\node (40) at (-11, 0.25) {infl};
  		\node (41) at (-15, 0.25) {read};
  		\node (42) at (-13, 0) {$\leq$};

  		\draw [bend right=90, looseness=1.25] (13.center) to (14.center);
  		\draw (12.center) to (16.center);
  		\draw (15.center) to (17.center);
  		\draw (0.center) to (2.center);
  		\draw (2.center) to (1.center);
  		\draw (1.center) to (0.center);
  		\draw (3.center) to (4.center);
  		\draw (4.center) to (5.center);
  		\draw (5.center) to (3.center);
  		\draw (6.center) to (7.center);
  		\draw (7.center) to (8.center);
  		\draw (8.center) to (6.center);
  		\draw (21.center) to (20.center);
  		\draw (20.center) to (19.center);
  		\draw (19.center) to (21.center);
  		\draw (22.center) to (23.center);
  		\draw (28.center) to (27.center);
  		\draw (27.center) to (26.center);
  		\draw (26.center) to (28.center);
  		\draw (34.center) to (33.center);
  		\draw (33.center) to (36.center);
  		\draw (36.center) to (34.center);
  		\draw (35.center) to (37.center);
  		\draw (38.center) to (39.center);
  		\draw [in=90, out=-90] (29.center) to (31.center);
  		\draw [in=-90, out=90] (32.center) to (30.center);

  \end{tikzpicture}
\end{equation*}
The 2-arrows of $\bf{C}$ encode \emph{existential rules} of the form $\forall \ x_0 \ \cdots \ \forall \ x_k \ \cdot \ \phi \to \phi'$ for two conjunctive queries $\phi, \phi'$ with $\tt{fv}(\phi) = \tt{fv}(\phi') = \set{x_0, \dots, x_k}$, also called tuple-generating dependencies in database theory, see \cite{Thomazo13} for a survey.
The composition of 2-arrows in $\bf{C}$ then allow us to compute entailment, e.g.:
\begin{equation*}
\begin{tikzpicture}[scale=0.7]
  		\node (0) at (-1, 0) {};
  		\node (1) at (-2, 1) {};
  		\node (2) at (-3, 0) {};
  		\node (3) at (-2, 0) {};
  		\node (4) at (-3.5, 0) {};
  		\node (5) at (-4.5, 1) {};
  		\node (6) at (-5.5, 0) {};
  		\node (7) at (-5, 0) {};
  		\node (8) at (-4, 0) {};
  		\node (9) at (-6, 0) {};
  		\node (10) at (-7, 1) {};
  		\node (11) at (-8, 0) {};
  		\node (12) at (-7, 0) {};
  		\node (13) at (8, 0) {};
  		\node (14) at (7, 1) {};
  		\node (15) at (6, 0) {};
  		\node (16) at (7, 0) {};
  		\node (17) at (5.5, 0) {};
  		\node (18) at (4.5, 1) {};
  		\node (19) at (3.5, 0) {};
  		\node (20) at (4, 0) {};
  		\node (21) at (5, 0) {};
  		\node (22) at (3, 0) {};
  		\node (23) at (2, 1) {};
  		\node (24) at (1, 0) {};
  		\node (25) at (2, 0) {};
  		\node (26) at (-7, 0.25) {Leib};
  		\node (27) at (-4.5, 0.25) {read};
  		\node (28) at (-2, 0.25) {Spin};
  		\node (29) at (2, 0.25) {Leib};
  		\node (30) at (4.5, 0.25) {infl};
  		\node (31) at (7, 0.25) {Spin};
  		\node (32) at (0, 0.25) {$\leq$};
  		\node (33) at (5, -0.75) {};
  		\node (34) at (4, -0.75) {};
  		\node (35) at (8, -3) {};
  		\node (36) at (7, -2) {};
  		\node (37) at (6, -3) {};
  		\node (38) at (7, -3) {};
  		\node (39) at (5.5, -3) {};
  		\node (40) at (4.5, -2) {};
  		\node (41) at (3.5, -3) {};
  		\node (42) at (4, -3) {};
  		\node (43) at (5, -3) {};
  		\node (44) at (3, -3) {};
  		\node (45) at (2, -2) {};
  		\node (46) at (1, -3) {};
  		\node (47) at (2, -3) {};
  		\node (48) at (2, -2.75) {Spin};
  		\node (49) at (4.5, -2.75) {infl};
  		\node (50) at (7, -2.75) {Leib};
  		\node (51) at (0, -2.75) {$\leq$};
  		\node[scale=0.5, circle, fill=black] (52) at (7, -3.5) {};
  		\node[scale=0.5, circle, fill=black] (53) at (7, -4.25) {};
  		\node (66) at (0, -5.75) {$\leq$};
  		\node (67) at (8, -6) {};
  		\node (68) at (7, -5) {};
  		\node (69) at (6, -6) {};
  		\node (70) at (7, -6) {};
  		\node (71) at (5.5, -6) {};
  		\node (72) at (4.5, -5) {};
  		\node (73) at (3.5, -6) {};
  		\node (74) at (4, -6) {};
  		\node (75) at (5, -6) {};
  		\node (76) at (3, -6) {};
  		\node (77) at (2, -5) {};
  		\node (78) at (1, -6) {};
  		\node (79) at (2, -6) {};
  		\node (80) at (2, -5.75) {Spin};
  		\node (81) at (4.5, -5.75) {infl};
  		\node (82) at (7, -5.75) {Leib};
  		\node[scale=0.5, circle, fill=black] (84) at (8.25, -7) {};
  		\node (85) at (10.5, -6) {};
  		\node (86) at (9.5, -5) {};
  		\node (87) at (8.5, -6) {};
  		\node (88) at (9.5, -6) {};
  		\node (89) at (9.5, -5.75) {Leib};
  		\node (90) at (8, -9) {};
  		\node (91) at (7, -8) {};
  		\node (92) at (6, -9) {};
  		\node (93) at (7, -9) {};
  		\node (94) at (5.5, -9) {};
  		\node (95) at (4.5, -8) {};
  		\node (96) at (3.5, -9) {};
  		\node (97) at (4, -9) {};
  		\node (98) at (5, -9) {};
  		\node (99) at (3, -9) {};
  		\node (100) at (2, -8) {};
  		\node (101) at (1, -9) {};
  		\node (102) at (2, -9) {};
  		\node (103) at (2, -8.75) {Spin};
  		\node (104) at (4.5, -8.75) {infl};
  		\node (105) at (7, -8.75) {phil};
  		\node[scale=0.5, circle, fill=black] (106) at (8.25, -10) {};
  		\node (107) at (10.75, -9) {};
  		\node (108) at (9.75, -8) {};
  		\node (109) at (8.75, -9) {};
  		\node (110) at (9.25, -9) {};
  		\node (111) at (9.75, -8.75) {disc};
  		\node (112) at (0, -8.75) {};
  		\node (113) at (0, -8.75) {$\leq$};
  		\node (114) at (10.25, -9) {};
  		\node (115) at (11.25, -9) {};
  		\node (116) at (12.25, -8) {};
  		\node (117) at (13.25, -9) {};
  		\node (118) at (12.25, -9) {};
  		\node (119) at (12.25, -8.75) {};
  		\node (120) at (12.25, -8.75) {calc};

  		\draw (11.center) to (10.center);
  		\draw (10.center) to (9.center);
  		\draw (9.center) to (11.center);
  		\draw (6.center) to (5.center);
  		\draw [bend right=90, looseness=1.25] (12.center) to (7.center);
  		\draw [bend left=90, looseness=1.25] (3.center) to (8.center);
  		\draw (6.center) to (4.center);
  		\draw (4.center) to (5.center);
  		\draw (2.center) to (1.center);
  		\draw (1.center) to (0.center);
  		\draw (0.center) to (2.center);
  		\draw (24.center) to (23.center);
  		\draw (23.center) to (22.center);
  		\draw (22.center) to (24.center);
  		\draw (19.center) to (18.center);
  		\draw (19.center) to (17.center);
  		\draw (17.center) to (18.center);
  		\draw (15.center) to (14.center);
  		\draw (14.center) to (13.center);
  		\draw (13.center) to (15.center);
  		\draw [in=90, out=-90, looseness=0.75] (20.center) to (33.center);
  		\draw [in=90, out=-90] (21.center) to (34.center);
  		\draw [in=-90, out=-90] (25.center) to (34.center);
  		\draw [in=-90, out=-90] (33.center) to (16.center);
  		\draw (46.center) to (45.center);
  		\draw (45.center) to (44.center);
  		\draw (44.center) to (46.center);
  		\draw (41.center) to (40.center);
  		\draw [bend right=90, looseness=1.25] (47.center) to (42.center);
  		\draw (41.center) to (39.center);
  		\draw (39.center) to (40.center);
  		\draw (37.center) to (36.center);
  		\draw (36.center) to (35.center);
  		\draw (35.center) to (37.center);
  		\draw [bend right=90, looseness=2.00] (53.center) to (52.center);
  		\draw (52.center) to (38.center);
  		\draw [in=-90, out=-90, looseness=1.25] (43.center) to (53.center);
  		\draw [bend right=90, looseness=2.00] (52.center) to (53.center);
  		\draw (78.center) to (77.center);
  		\draw (77.center) to (76.center);
  		\draw (76.center) to (78.center);
  		\draw (73.center) to (72.center);
  		\draw [bend right=90, looseness=1.25] (79.center) to (74.center);
  		\draw (73.center) to (71.center);
  		\draw (71.center) to (72.center);
  		\draw (69.center) to (68.center);
  		\draw (68.center) to (67.center);
  		\draw (67.center) to (69.center);
  		\draw [in=-90, out=-90] (75.center) to (84.center);
  		\draw (87.center) to (86.center);
  		\draw (86.center) to (85.center);
  		\draw (85.center) to (87.center);
  		\draw [in=-90, out=0] (84.center) to (88.center);
  		\draw [in=180, out=-90] (70.center) to (84.center);
  		\draw (101.center) to (100.center);
  		\draw (100.center) to (99.center);
  		\draw (99.center) to (101.center);
  		\draw (96.center) to (95.center);
  		\draw [bend right=90, looseness=1.25] (102.center) to (97.center);
  		\draw (96.center) to (94.center);
  		\draw (94.center) to (95.center);
  		\draw (92.center) to (91.center);
  		\draw (91.center) to (90.center);
  		\draw (90.center) to (92.center);
  		\draw [in=-90, out=-90] (98.center) to (106.center);
  		\draw (109.center) to (108.center);
  		\draw (108.center) to (107.center);
  		\draw (107.center) to (109.center);
  		\draw [in=-90, out=0] (106.center) to (110.center);
  		\draw [in=180, out=-90] (93.center) to (106.center);
  		\draw (115.center) to (116.center);
  		\draw (116.center) to (117.center);
  		\draw (117.center) to (115.center);
  		\draw [bend right=90, looseness=1.25] (114.center) to (118.center);
\end{tikzpicture}
\end{equation*}
where the second and third inequations follow from the axioms of definition~\ref{def-CB} (adjointness and lax comonoidality), the first and last from the generators.

\begin{definition}
    \begin{problem}
    \problemtitle{$\tt{Entailment}$}
    \probleminput{$r \in \bf{G}(u, s), \quad r' \in \bf{G}(u', s), \quad L : \bf{G} \to \bf{C}$}
    \problemoutput{$L(r) \leq L(r')$}
  \end{problem}
\end{definition}

\begin{proposition}
  $\tt{Entailment}$ is undecidable for finitely presented Cartesian bicategories.
  When $\bf{C}$ is freely generated, the problem reduces to conjunctive query $\tt{Containment}$.
\end{proposition}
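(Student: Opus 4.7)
The plan is to treat the two assertions separately. For the undecidability claim I would reduce from the classical problem of conjunctive query containment modulo tuple-generating dependencies (TGDs), which is known to be undecidable (see \cite{Thomazo13}). Given a signature $\Sigma$, a finite set $T$ of TGDs over $\Sigma$, and queries $\phi, \phi' \in \cal{Q}_\Sigma$, I build a finitely presented Cartesian bicategory $\bf{C}$ as follows: the 1-generators are those of $\bf{CB}(\Sigma)$, and for each rule $\forall \vec{x} \cdot \alpha \to \beta$ of $T$ I adjoin a 2-generator $\Theta(\alpha) \leq \Theta(\beta)$, where $\Theta$ is the translation of Theorem \ref{translation}. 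Soundness of the GCQ calculus together with Theorem \ref{translation} implies that $\leq$ in $\bf{C}$ captures precisely entailment of conjunctive queries modulo $T$.

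To package this as an instance of $\tt{Entailment}$ I pick a trivial pregroup grammar with two words $w, w'$ whose dictionary entries produce types of the correct arity, a free rigid functor $L : \bf{G} \to \bf{C}$ sending the dictionary entry for $w$ (resp.\ $w'$) to the GCQ diagram $\Theta(\phi)$ (resp.\ $\Theta(\phi')$), and take $r, r'$ to be the canonical one-word parsings. Then $L(r) \leq L(r')$ in $\bf{C}$ holds if and only if $T \models \phi \sub \phi'$, so decidability of $\tt{Entailment}$ would decide TGD entailment, a contradiction.

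For the second claim, suppose $\bf{C} = \bf{CB}(\Sigma)$ is freely generated. On an input $(r, r', L)$, the images $L(r), L(r')$ are parallel arrows $0 \to n$ in $\bf{CB}(\Sigma)$: they are parallel because $L$ must send each word $w \in V$ to the monoidal unit $0$ (words generate no nontrivial object in a freely generated Cartesian bicategory on $\Sigma$), so $L(u) = L(u') = 0$, and they share codomain $L(s) = n$ by hypothesis. Applying the translation $\Lambda$ of Theorem \ref{translation} yields conjunctive queries $\Lambda(L(r)), \Lambda(L(r')) \in \cal{Q}_\Sigma$, and by the same theorem $L(r) \leq L(r')$ iff $\Lambda(L(r)) \sub \Lambda(L(r'))$, which is an instance of $\tt{Containment}$. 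Since $L$ and $\Lambda$ are computable in logspace, the reduction is effective.

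The main obstacle is the undecidability half: I need the Cartesian bicategory $\bf{C}$ obtained from the TGDs to be \emph{conservative}, in the sense that the axioms of Definition \ref{def-CB} together with the new 2-generators do not force any spurious inequalities $d \leq d'$ beyond those semantically entailed by $T$. This conservativity follows from combining Theorem \ref{translation} (relating $\leq$ in $\bf{CB}(\Sigma)$ to query containment) with the Chandra--Merlin theorem (Theorem \ref{chandra-merlin}) and the standard chase-based characterisation of TGD entailment as containment of the chase of $CM(\phi)$ under $T$ into $CM(\phi')$.
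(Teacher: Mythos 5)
Your proposal takes essentially the same route as the paper: the paper's proof simply cites the undecidability of conjunctive-query entailment under existential rules (tuple-generating dependencies) from Baget--Mugnier for the first claim, and invokes Theorem~\ref{translation} to give the logspace reduction of $\tt{Entailment}$ to $\tt{Containment}$ in the freely generated case. Your version is a more detailed unpacking of that argument --- in particular you correctly identify the conservativity of the finitely presented bicategory over TGD entailment as the point that actually needs justification, which the paper leaves implicit.
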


\begin{proof}
Entailment of conjunctive queries under existential rules is undecidable, see \cite{BagetMugnier02}.
When $\bf{C} = \bf{CB}(\Sigma)$ is freely generated by a relational signature $\Sigma$, i.e. with no existential rules, theorem~\ref{translation} yields a logspace reduction to $\tt{Containment}$: $\tt{Entailment} \in \tt{NP}$.
\end{proof}

Models in free Cartesian bicategories make $\tt{Entailment}$ a decidable problem, they also allow us to reformulate lemma~\ref{lemma-factorisation} as follows:
every model $F : \bf{G} \to \bf{Rel}$ factorises as $F = K \circ L$ for a free model $L : \bf{G} \to \bf{CB}(\Sigma)$ and a relational structure $K : \bf{CB}(\Sigma) \to \bf{Rel}$.
In the next section, we use this fact to define  $\tt{QuestionAnswering}$ as a computational problem.

\section{Question Answering as an NP-complete Problem}
We consider the following computational problem: given a natural language corpus and a question, does the corpus contain an answer?
We show how to translate a corpus into a relational database so that question answering reduces to query evaluation.
We fix a free model $L : \bf{G} \to \bf{CB}(\Sigma)$ with $L(s) = 0$, i.e. grammatical sentences are mapped to closed formulae.
We assume that $L(q) = L(a)$ for $q$ and $a$ the question and answer types respectively, i.e. both are mapped to queries with the same number of free variable. Lexical items such as ``influence'' and ``Leibniz'' are mapped to their own symbol in the relational signature $\Sigma$, whereas functional words such as relative pronouns are sent to the Frobenius algebra of $\bf{CB}(\Sigma)$, see \cite{SadrzadehEtAl13}.

We define a corpus as a set of sentences $u \in List(V)$ with parsing $r : u \to s$ in $\bf{G}$, i.e. a subset $C \sub \coprod_{u \in List(V)} \bf{G}(u, s)$.
If we apply $L$ independently to each sentence, the resulting queries have disjoint sets of variables.
In order to obtain the desired database, we need to map variables to some designated entities: a standard natural language processing task called \emph{entity linking} (EL).

Let $\phi_C = \bigwedge_{r \in C} \Lambda(L(r))$ be the conjunction of each sentence in the corpus, where $\Lambda$ is the translation from diagrams to conjunctive queries of theorem~\ref{translation}.
We define an entity linking for $C$ as a function $\mu : \tt{var}(\phi_C) \to E$ for some finite set $E$ of entities.
Thus, we get the following algorithm for translating the corpus $C$ with entity linking $\mu : \tt{var}(\phi_C) \to E$ into a $\Sigma$-structure:
\begin{enumerate}
  \item translate each parsed sentence $r \in C$ into a conjunctive query $\Lambda(L(r)) \in \cal{Q}_\Delta$,
  \item compute their conjunction $\phi_C = \bigwedge_{r \in C} \Lambda(L(r))$ and the substitution $\mu(\phi_C)$,
  \item construct the corresponding canonical structure $K = CM(\mu(\phi_C)) \in \cal{M}_\Sigma(E)$.
\end{enumerate}

\begin{definition}
    \begin{problem}
    \problemtitle{$\tt{QuestionAnswering}$}
    \probleminput{$C \sub \coprod_{u \in List(V)} \bf{G}(u, s), \quad \mu : \tt{var}(\phi_C) \to E, \quad \phi \in \cal{Q}_\Sigma$}
    \problemoutput{$\tt{eval}(\phi, K) \sub E^{\tt{fv}(\phi)} \quad$ where $K = CM(\mu(\phi_C))$}
  \end{problem}
\end{definition}

\begin{theorem}
  $\tt{QuestionAnswering}$ is $\tt{NP-complete}$.
\end{theorem}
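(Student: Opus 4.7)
The plan is to establish separately that $\tt{QuestionAnswering}$ lies in $\tt{NP}$ and is $\tt{NP-hard}$, so that it is $\tt{NP-complete}$.

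For membership, I would observe that mapping an input triple $(C, \mu, \phi)$ to the $\tt{Evaluation}$ instance $(\phi, K)$ with $K = CM(\mu(\phi_C))$ is computable in polynomial time. Each parsed sentence $r \in C$ is first sent through the fixed free model $L$ to a diagram in $\bf{CB}(\Sigma)$ and then through the translation $\Lambda$ of theorem~\ref{translation} to a conjunctive query, both in polynomial time; computing the conjunction $\phi_C$, applying the substitution $\mu$, and taking the canonical model $CM$ are all linear. Hence $\tt{QuestionAnswering}$ reduces in polynomial time to $\tt{Evaluation}$, which is in $\tt{NP}$ by theorem~\ref{chandra-merlin}.

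For hardness, I would reduce from $\tt{Evaluation}$ restricted to a signature $\Sigma_0 = \set{\bullet}$ with $\tt{ar}(\bullet) = 2$, still $\tt{NP-hard}$ by the graph-homomorphism argument used for $\tt{Homomorphism}$. Given such an instance $(\phi_0, K_0)$, I fix once and for all a binary symbol $R \in \Sigma$ realised by some template parsing $r_R : u_R \to s$ in $\bf{G}$ with $\Lambda(L(r_R)) = R(x, y)$ for two distinguished variables $x, y$; a transitive-verb entry such as ``infl'' in example~\ref{example}, equipped with its standard distributional semantics, provides exactly this. For each edge $(a, b) \in K_0(\bullet)$ I include an $\alpha$-renamed copy of $r_R$ in the corpus $C$ and let $\mu$ send its two variables to $a, b \in E = U(K_0)$. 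Then $\mu(\phi_C) = \bigwedge_{(a, b) \in K_0(\bullet)} R(a, b)$, so $CM(\mu(\phi_C))$ coincides with $K_0$ up to the renaming $\bullet \leftrightarrow R$. Taking as the QA query the analogous renaming of $\phi_0$, the output sets of the two instances are in bijection, completing a logspace reduction.

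The one non-routine ingredient is the expressiveness assumption on the fixed pair $(\bf{G}, L)$: one must know that at least one binary symbol of the ambient signature $\Sigma$ is produced by some grammatical sentence whose free variables can be independently entity-linked. This is very mild in practice --- satisfied the moment the dictionary contains any transitive verb treated in the standard distributional way --- and once granted, the two reductions combine to give $\tt{QuestionAnswering} \in \tt{NP-complete}$.
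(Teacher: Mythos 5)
Your proposal is correct and follows essentially the same route as the paper: membership by the polynomial-time reduction to $\tt{Evaluation}$, and hardness by encoding a graph as a corpus of subject--verb--object sentences with entity linking sending the noun variables to the graph's nodes (the paper only sketches this and cites \cite{CoeckeEtAl18a}). Your explicit statement of the expressiveness assumption --- that the fixed grammar and free model must supply at least one transitive verb realising a binary symbol --- is a useful clarification of a hypothesis the paper leaves implicit.
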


\begin{proof}
  Membership follows immediately by reduction to $\tt{Evaluation}$.
  Hardness follows by reduction from graph homomorphism, we only give a sketch of proof and refer to \cite{CoeckeEtAl18a} where EL is called matching.
  Any graph can be encoded in a corpus given by a set of subject-verb-object sentences, where EL maps nouns to their corresponding node.
\end{proof}

\begin{example}
  We take $L : \bf{G} \to \bf{CB}(\Sigma)$ to map the question word ``Who'' to the compact-closed structure, the determinant ``a'' to the unit and the common noun ``philosopher'' to the symbol $phil \in \Sigma$ composed with the comonoid.
  We can now find the nouns that answer the question $r \in \bf{G}(u, q)$ of example~\ref{example} as the evaluation of the following query:
  \begin{equation*}\begin{aligned}
L(r) \s &= \s
\begin{tikzpicture}[scale=0.666, baseline=(O.base)]
    \node (O) at (0, 0) {};
		\node (0) at (6, 0) {};
		\node (1) at (5, 1) {};
		\node (2) at (4, 0) {};
		\node (3) at (5, 0) {};
		\node (4) at (2, 0) {};
		\node (5) at (1, 1) {};
		\node (6) at (0, 0) {};
		\node (7) at (0.5, 0) {};
		\node (8) at (1.5, 0) {};
		\node (10) at (1, 0.25) {infl};
		\node (11) at (5, 0.25) {phil};
		\node [scale=0.5, circle, fill=black] (12) at (8, 1) {};
		\node (13) at (12, 0) {};
		\node (14) at (11, 1) {};
		\node (15) at (10, 0) {};
		\node (16) at (10.5, 0) {};
		\node (17) at (11, 0.25) {disc};
		\node (18) at (11.5, 0) {};
		\node (19) at (12.75, 0) {};
		\node (20) at (13.75, 1) {};
		\node (21) at (14.75, 0) {};
		\node (22) at (13.75, 0) {};
		\node (23) at (13.75, 0.25) {};
		\node (24) at (13.75, 0.25) {calc};
		\node [scale=0.5, circle, fill=black] (26) at (5, -0.75) {};
		\node (27) at (3, 0.25) {a};
		\node (28) at (8, 0.25) {who};
		\node (29) at (-1, 0) {};
		\node (30) at (-3, 0) {};
		\node (31) at (-2, 0.25) {Who};
		\node (32) at (15.5, 0.25) {};
		\node (33) at (15.5, 0.25) {?};
		\node (34) at (-3, -3) {};
		\node [scale=0.5, circle, fill=black] (37) at (3, 0) {};
		\node (38) at (7, 0) {};
		\node (39) at (8, 0) {};
		\node (40) at (9, 0) {};
		\node (41) at (4, -1.5) {};
		\node (42) at (6, -1.5) {};

		\draw (6.center) to (5.center);
		\draw (6.center) to (4.center);
		\draw (4.center) to (5.center);
		\draw (2.center) to (1.center);
		\draw (1.center) to (0.center);
		\draw (0.center) to (2.center);
		\draw (15.center) to (14.center);
		\draw (14.center) to (13.center);
		\draw (13.center) to (15.center);
		\draw (19.center) to (20.center);
		\draw (20.center) to (21.center);
		\draw (21.center) to (19.center);
		\draw [bend right=90, looseness=1.25] (18.center) to (22.center);
		\draw (3.center) to (26.center);
		\draw [bend left=90, looseness=2.00] (7.center) to (29.center);
		\draw [bend left=90, looseness=1.50] (30.center) to (29.center);
		\draw (30.center) to (34.center);
		\draw [in=0, out=180, looseness=1.25] (26.center) to (41.center);
		\draw [in=180, out=0, looseness=1.25] (26.center) to (42.center);
		\draw [in=-90, out=-180] (41.center) to (37.center);
		\draw [in=-90, out=0] (42.center) to (38.center);
		\draw [in=-180, out=90, looseness=1.25] (38.center) to (12.center);
		\draw (12.center) to (39.center);
		\draw [in=90, out=0, looseness=1.25] (12.center) to (40.center);
		\draw [bend right=90, looseness=1.75] (40.center) to (16.center);
		\draw [bend left=90, looseness=1.50] (39.center) to (8.center);
\end{tikzpicture}
\\
\Lambda(L(r)) \s &= \s \exists \ x_1 \ \exists \ x_2 \ \cdot \ infl(x_0, x_1) \s \land \s phil(x_1) \s \land \s disc(x_1, x_2) \s \land \s calc(x_2)
\end{aligned}\end{equation*}

  If ``Spinoza influenced the philosopher Leibniz'' and ``Leibniz discovered calculus'' are in the corpus $C$, we have $L(Spinoza \to n) \in \tt{QuestionAnswering}(C, \mu, \Lambda (L(r)))$.
\end{example}

\section*{Conclusion}
We showed that DisCo models in the category of sets and relations factorise
through cartesian bicategories. As these provide a categorical formulation
of conjunctive queries and relational databases,
we were able to lift computational problems and complexity-theoretic results
from database theory to natural language processing.
This opens up many avenues for future work:
\begin{itemize}
  \item text summarisation through conjunctive query minimisation \cite{ChekuriRajaraman00},
  \item semantics of ``How many?'' questions and counting problems \cite{StefanoniEtAl18},
  \item many-sorted relational models with graphical regular logic \cite{FongSpivak18a},
  \item from Boolean semantics to generalised relations in arbitrairy topoi \cite{CoeckeEtAl18},
  \item from regular logic to description logics in bicategories of relations \cite{Patterson17},
  \item comonadic semantics for bounded short-term memory \cite{AbramskyShah18},
  \item quantum speedup for question answering via Grover's search \cite{ZengCoecke16}.
\end{itemize}

\section*{Acknowledgments}

The authors would like to thank Bob Coecke, Dan Marsden, Antonin Delpeuch, Vincent Wang, Jacob Leygonie, Dusko Pavlovic, Rui Soares Barbosa and Samson Abramsky for inspiring discussions
in the process of writing this article,
as well as anonymous Reviewers of ACT2019 for constructive comments that improved the presentation of this work.
K.M. is supported by the EPSRC NQIT Hub and Cambridge Quantum Computing Ltd.
A.T. acknowledges Simon Harrison for financial DPhil support.

\bibliographystyle{eptcs}
\bibliography{discocat-complexity}

\end{document}